\documentclass[11pt]{article}






\usepackage[utf8]{inputenc} 
\usepackage[T1]{fontenc}    
\usepackage{url}            
\usepackage{booktabs}       
\usepackage{amsfonts}       
\usepackage{nicefrac}       
\usepackage{microtype}      
\usepackage{xcolor}         
\usepackage{mathtools}
\usepackage{bm}
\usepackage{dsfont}
\usepackage{bbm}
\usepackage{thmtools, thm-restate}





\usepackage{amsthm}
\usepackage{amsmath}

\newtheorem{theorem}{Theorem}[section]
\newtheorem*{theorem*}{Theorem}

\newtheorem{proposition}[theorem]{Proposition}
\newtheorem*{proposition*}{Proposition}
\newtheorem{lemma}[theorem]{Lemma}
\newtheorem*{lemma*}{Lemma}

\newtheorem*{conjecture*}{Conjecture}

\newtheorem*{fact*}{Fact}

\newtheorem*{hypothesis*}{Hypothesis}

\newtheorem*{claim*}{Claim}

\theoremstyle{definition}
\newtheorem{definition}[theorem]{Definition}

\theoremstyle{remark}

\newtheorem*{remark*}{Remark}

\newtheorem*{observation*}{Observation}

\newcommand{\eat}[1]{}

\newcommand{\R}{\mathbb{R}}

\newcommand{\calD}{\mathcal{D}}

\newcommand{\poly}{\mathrm{poly}}




 \newcommand{\set}[1]{\{ #1 \} }

\newcommand{\norm}[1]{\lVert #1 \rVert}






\newcommand{\iprod}[1]{\langle#1\rangle}

\newcommand{\Esymb}{\mathbb{E}}
\newcommand{\Psymb}{\mathbb{P}}

\DeclareMathOperator*{\E}{\Esymb}
\DeclareMathOperator*{\Var}{\text{Var}}
\DeclareMathOperator*{\ProbOp}{\Psymb}

\renewcommand{\Pr}{\ProbOp}

\newcommand{\dt}{\, d{t}}

\newcommand{\tw}{\tilde{w}}

\newcommand{\bS}{\mathbb{S}}

\newcommand{\eps}{\varepsilon}
\renewcommand{\epsilon}{\varepsilon}




\newcommand{\tx}{\widetilde{x}}
\newcommand{\tu}{\tilde{u}}
\newcommand{\tv}{\tilde{v}}
\newcommand{\hw}{\widehat{w}}

\newcommand{\tcDx}{\widetilde{\mathcal{D}}_x}



\newif\ifnotes\notesfalse

\ifnotes
\usepackage{color}
\definecolor{mygrey}{gray}{0.50}
\newcommand{\notename}[2]{{\textcolor{blue}{\footnotesize{\bf (#1:} {#2}{\bf ) }}}}

\else

\newcommand{\notename}[2]{{}}

\fi

\newcommand{\anote}[1]{{\notename{Aravindan}{#1}}}




\usepackage{thmtools}
\usepackage{thm-restate}

\usepackage{fullpage}
\usepackage[colorlinks=true,linkcolor=blue, backref=page]{hyperref}


\usepackage[ruled,vlined]{algorithm2e}

\title{Agnostic Learning of General ReLU Activation Using Gradient Descent}

%

\author{Pranjal Awasthi\\ \small{Google Research}\\ \small{pranjalawasthi@google.com}  \and Alex Tang\footnotemark[1]\\ \small{Northwestern University}\\ \small{alextang@u.northwestern.edu} \and Aravindan Vijayaraghavan\thanks{The last two authors are supported by the National Science Foundation (NSF) under Grant No.~CCF-1652491 and CCF 1934931. The last author was also funded by a Google Research Scholar award. }
 \\ \small{Northwestern University} \\ \small{aravindv@northwestern.edu}}
\date{}

\begin{document}

\maketitle

\begin{abstract}
We provide a convergence analysis of gradient descent for the problem of agnostically learning a single ReLU function with moderate bias under Gaussian distributions. Unlike prior work that studies the setting of zero bias, we consider the more challenging scenario when the bias of the ReLU function is non-zero. Our main result establishes that starting from random initialization, in a polynomial number of iterations gradient descent outputs, with high probability, a ReLU function that achieves an error that is within a constant factor of the optimal error of the best ReLU function with moderate bias. We also provide finite sample guarantees, and these techniques generalize to a broader class of marginal distributions beyond Gaussians.  
\end{abstract}

\section{Introduction} \label{sec:intro}

\newcommand{\newwb}{(\tw,b_w)} 
\newcommand{\neww}{w} 

\anote{I'm trying to make the notation in intro consistent with the rest of the paper. }

Gradient descent forms the bedrock of modern optimization algorithms for machine learning. Despite a long line of work in understanding and analyzing the gradient descent iterates, there remain several outstanding questions on whether they can provably learn important classes of problems. \anote{was:many questions remain regarding its effectiveness for provably learning important classes of problems.} In this work we study one of the simplest learning problems where the properties of gradient descent are not well understood, namely {\em agnostic learning of a single ReLU function}. 

More formally, let $\tilde{D}$ be a distribution over $\mathbb{R}^d \times \mathbb{R}$. A ReLU function is parameterized by $w=\newwb$ where $\tw \in \mathbb{R}^d$ and $b_w \in \mathbb{R}$. For notational convenience, we will consider the points to be in $\R^{d+1}$ by appending $\tx$ with a fixed coordinate $1$ as $x=(\tx,1)$. Let $D$ be the distribution over $\R^{d+1} \times \R$ induced by $\tilde{D}$. 
We define the loss incurred at $w=(\tw,b_w)$ to be 
$$L(w) = \frac{1}{2}\E_{(\tx,y) \sim \tilde{D}} \Big[(\sigma(\tw^\top \tx + b_w) - y)^2 \Big] = \frac{1}{2}\E_{(x,y) \sim D} \Big[(\sigma(w^\top x) - y)^2 \Big].$$ 
Here $\sigma(x) = \max(x,0)$ is the standard rectified linear unit popularly used in deep learning.   The goal in agnostic learning of a ReLU function (or agnostic ReLU regression) is to design a polynomial time learning algorithm that takes as input i.i.d. samples from $D$ and outputs $w=\newwb$ such that $L(w)$ compares favorably with $OPT$ that is given by $$OPT \coloneqq \min_{w=\newwb \in H} \frac{1}{2}\E_{(x,y) \sim D} [(\sigma(w^\top x) - y)^2].$$ 
Here the hypothesis set that algorithm competes with is $H=\{w=(\tw,b_w):  \|\tw\| \in [\tfrac{1}{C_1}, C_1], |b_w| \leq C_2)\}$, where $C_1, C_2>0$ are absolute constants. This implies that the relative bias $|b_w|/\norm{\tw}_2$ is bounded. (We remark that the assumption of $\norm{\tw}=\Theta(1)$ is for convenience; Appendix~\ref{app:scaling} shows why we can assume this is essentially without loss of generality).  This is a non-trivial and interesting regime; when the bias is too large in magnitude the optimal ReLU function fitting the data is either the constant zero function almost everywhere or a linear function almost everywhere.\anote{changes}

\anote{Rephrased a bit}
This agnostic learning problem has been extensively studied and polynomial time learning algorithms exists for a variety of settings. This includes the noisy teacher setting where $\E[y|x]$ is given by a ReLU function \cite{kakade2011efficient, mukherjee2020study} and the fully agnostic setting where no assumption on $y$ is made \cite{goel2019learning, diakonikolas2020algorithms}. In a recent work \cite{frei2020agnostic} analyzed the properties of gradient descent for the above agnostic learning problem when the bias term is assumed to be zero, i.e., $H = \{w=(\tw,0): \|\tw\| \leq O(1)\}$. The gradient descent based learning algorithm corresponds to the following sequence of updates starting from a suitable initializer $w_0$: $w_{t+1} = w_t - \eta \nabla L(w_t)$. The work of \cite{frei2020agnostic} proved that starting from zero initialization and for distributions where the marginal of $x$ satisfies some mild assumptions \anote{was: not degenerate}, gradient descent iterates produce, in polynomial time, a point $w_T$ such that $L(w_T) = O(\sqrt{OPT})$. 

While the above provides the first non-trivial learning guarantees for gradient descent in the case of agnostic ReLU learning, it suffers from a few key limitations. The result of \cite{frei2020agnostic} only applies in the setting when the distribution has a bounded domain and when the bias terms are zero. When the distribution is not bounded, the error of $O(\sqrt{OPT})$ also includes some dimension-dependent terms; e.g., when the marginal of $\tx$ is a standard Gaussian $\mathcal{N}(0,I_{d \times d})$, it gives a $O(\sqrt{d \cdot OPT})$ error. 
Moreover, there is a natural question of improving the bound of $O(\sqrt{OPT})$ on the error of gradient descent (since the most interesting regime of parameters is when $OPT \ll 1$). This is particularly intriguing given the recent result of \cite{diakonikolas2020algorithms} that shows that, assuming zero bias, gradient descent on a convex surrogate for $L(w)$ achieves $O(OPT)$ error. This raises the question of whether the same holds for gradient descent on $L(w)$ itself. In another recent work, the authors in \cite{vardi2021learning} are able to provide convergence guarantees for gradient descent in the presence of bias terms, but under the strong {\em realizability} assumption, i.e, assuming that $OPT=0$. \anote{added:} To the best of our knowledge, there are no existing guarantees for gradient descent (or for any polynomial time algorithm, in fact) for agnostic learning of a ReLU function with bias. 

\subsection{Our Results}

In this work we significantly improve the state of the art of guarantees for gradient descent for agnostic ReLU regression. In particular, we show that when the marginal of $x$ is a Gaussian, gradient descent on $L(w)$ achieves an error of $O(OPT)$, even under the presence of bias terms! In particular, this answers an open question raised in the work of \cite{frei2020agnostic}. There are no additional dependencies on the dimension. Given the recent statistical query lower bound of \cite{goel2019learning} that rules out an additive guarantee of $OPT + \epsilon$ for agnostic ReLU regression, our result shows that vanilla gradient descent on the target loss already achieves near optimal error guarantees. Below we state our main theorem. For details regarding the proofs please see Section~\ref{sec:overview}.

\begin{theorem}
\label{thm:main-informal}
Let $C_1\ge 1, C_2>0, c_3>0$ be absolute constants. Let $D$ be a distribution over $(\tx,y) \in \mathbb{R}^d \times \mathbb{R}$ where the marginal over $\tx$ is the standard Gaussian $\mathcal{N}(0,I)$. Let $H=\{w=(\tw,b_w): \|\tw\| \in [1/C_1, C_1],  |b_w| \leq C_2\}$, and consider population gradient descent iterates: $w_{t+1} = w_t - \eta \nabla L(w_t)$. For a suitable constant learning rate $\eta$, when starting from $w_0=(\tw_0, 0)$ where $\tw_0$ is randomly initialized from a radially symmetric distribution, with at least constant probability $c_3>0$ one of the iterates $w_T$ of gradient descent after $\text{poly}(d, \frac{1}{\epsilon})$ steps satisfies $L(w_T) = O(OPT) + \epsilon$.
\end{theorem}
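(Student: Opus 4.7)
The plan is to follow a potential-function analysis: track $\Phi_t \coloneqq \|w_t - w^*\|_2^2$, where $w^* = (\tilde w^*, b_{w^*}) \in H$ is a minimizer of $L$ over $H$, and show that $\Phi_t$ contracts geometrically into an $O(OPT) + \epsilon$-sized ball. The workhorse will be a ``restricted strong convexity'' (RSC) statement
\[
\bigl\langle -\nabla L(w),\, w^* - w \bigr\rangle \;\geq\; \alpha \, \|w - w^*\|_2^2 \;-\; \beta \cdot OPT,
\]
for absolute constants $\alpha > 0$ and $\beta$, valid at every iterate $w = w_t$ reached from the random initialization. Combined with a standard Lipschitz-type bound $\|\nabla L(w)\|_2^2 \le O(\Phi_t + OPT)$, the one-step expansion $\Phi_{t+1} = \Phi_t - 2\eta\langle \nabla L(w_t), w_t - w^*\rangle + \eta^2 \|\nabla L(w_t)\|_2^2$ yields a recursion of the form $\Phi_{t+1} \le (1 - c\eta)\Phi_t + O(\eta)\cdot OPT + O(\eta^2)$ for a sufficiently small constant step size $\eta$. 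Iterating this for $T = \poly(d, 1/\epsilon)$ steps gives $\Phi_T = O(OPT) + \epsilon$, and the $1$-Lipschitzness of $\sigma$ converts this to $L(w_T) \le 2\, \E[(\sigma(w_T^\top x) - \sigma({w^*}^\top x))^2] + 2\,OPT = O(OPT) + \epsilon$.

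\textbf{Proving the RSC inequality.} Writing $y = \sigma({w^*}^\top x) + \xi$ with $\E[\xi^2] \le 2\,OPT$ (decomposing through the best-in-class predictor), the gradient expands as
\[
\bigl\langle \nabla L(w), w - w^*\bigr\rangle = \E\bigl[(\sigma(w^\top x) - \sigma({w^*}^\top x))\,\mathbf{1}[w^\top x > 0]\,(w-w^*)^\top x\bigr] - \E\bigl[\xi\,\mathbf{1}[w^\top x > 0]\,(w-w^*)^\top x\bigr],
\]
a ``clean'' term plus a ``noise'' term. For the clean term I would reduce to a planar integral by conditioning on the component of $x$ inside $\mathrm{span}(\tilde w, \tilde w^*)$; the biases turn the activation regions $\{w^\top x > 0\}$ and $\{{w^*}^\top x > 0\}$ into shifted half-planes, and one can evaluate the resulting two-dimensional Gaussian integral in closed form (in terms of the angle between $\tilde w, \tilde w^*$ and of $b_w, b_{w^*}$) and show it is at least $\alpha \|w - w^*\|_2^2$ whenever $w \in H$ and the angle stays bounded away from $\pi$. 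For the noise term, Cauchy--Schwarz combined with the Gaussian second-moment bound $\E[((w-w^*)^\top x)^2 \mathbf{1}[w^\top x > 0]] \le C \|w-w^*\|_2^2$ gives a bound $\sqrt{2\,OPT}\cdot C\|w-w^*\|_2$, which AM--GM upgrades to $(\alpha/2)\|w-w^*\|_2^2 + O(OPT)$, absorbing the cross term into the strong-convexity term. This is where the $O(OPT)$ (rather than $O(\sqrt{OPT})$) dependence ultimately enters.

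\textbf{Random initialization and invariance of the good cone.} Because $\tilde w_0$ is drawn from a radially symmetric distribution and $\tilde w^*$ has $\|\tilde w^*\|_2 \in [1/C_1, C_1]$, the angle $\theta_0 = \angle(\tilde w_0, \tilde w^*)$ is bounded away from $\pi$ with probability at least $c_3 > 0$ (in fact, by symmetry, with probability $\geq 1/2$ for any fixed angular threshold $< \pi/2$). Conditioning on this event, I would show that the ``good cone'' --- iterates whose angle to $\tilde w^*$ stays below $\theta_0$ and whose norm and bias stay in a slight enlargement of $H$ --- is forward-invariant under the GD dynamics with small constant step size: the same two-dimensional integral used in the RSC bound controls the component of $\nabla L(w_t)$ that could push $\tilde w_t$ away from $\tilde w^*$ angularly, so the angle does not increase and the iterate does not leave the cone before $\Phi_t$ contracts. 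Forward invariance, combined with the RSC inequality applied along the entire trajectory, then delivers the per-iteration decrement on $\Phi_t$ and hence the theorem.

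\textbf{Main obstacle.} The linchpin is proving the clean part of the RSC inequality robustly in the presence of bias. When $b_w$ and $b_{w^*}$ have large magnitudes and opposite signs, the two activation half-spaces $\{w^\top x > 0\}$ and $\{{w^*}^\top x > 0\}$ have very little overlap under the Gaussian, and the clean integral risks degenerating to $o(\|w - w^*\|_2^2)$. Controlling this requires both the compactness of $H$ (capping $|b_w|$, keeping $\|\tilde w\|_2$ bounded below) and the structural fact that, starting from $b_{w_0} = 0$ inside the good angular cone, GD moves $b_{w_t}$ monotonically toward $b_{w^*}$, so the activation-overlap probability stays $\Omega(1)$ along the trajectory. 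Establishing this monotone-bias-motion property alongside the angle contraction is the main technical challenge; everything else is a careful but standard dance of Gaussian integrals, Cauchy--Schwarz and one-step descent bookkeeping.
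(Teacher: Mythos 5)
Your skeleton (potential $\Phi_t=\|w_t-w^*\|^2$, a restricted-strong-convexity inequality $\langle\nabla L(w),w-w^*\rangle\ge\alpha\|w-w^*\|^2-O(\sqrt{OPT})\|w-w^*\|$, and the one-step expansion) matches the paper's, and your handling of the noise term by Cauchy--Schwarz and absorption into the quadratic is exactly what Lemma~\ref{lem:induction} does. But there is a genuine gap at the point you yourself flag as ``the main obstacle'': everything rests on keeping the activation overlap $\Pr[w_t^\top x\ge 0,\ v^\top x\ge 0]=\Omega(1)$ along the whole trajectory, and your proposed mechanism for this --- forward invariance of an angular cone plus the claim that $b_{w_t}$ moves \emph{monotonically} toward $b_{w^*}$ --- is neither established nor likely true in the agnostic setting. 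The bias coordinate of $\nabla L(w_t)$ contains the noise contribution $\E[(\sigma(v^\top x)-y)\sigma'(w_t^\top x)]$, which can have magnitude up to $\sqrt{2\,OPT}$ with arbitrary sign, so the bias can be pushed away from $b_{w^*}$ at every step; similarly the angular component of the gradient is perturbed by the noise term, and over $\poly(d,1/\eps)$ iterations there is no a priori reason these perturbations cannot accumulate and rotate the iterate out of any fixed cone. Your closed-form planar-Gaussian evaluation of the clean term is also harder than you suggest once both biases are nonzero (the paper explicitly notes the overlap measure has no closed form with bias), and your dimension-free bound $\|\nabla L(w)\|^2\le O(\Phi_t+OPT)$ is false for Gaussian marginals --- the correct bound carries a factor $d$, which is why the paper takes $\eta=c_\eta d^{-1}$ rather than a constant.

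The paper's route around this obstacle is different and worth internalizing: instead of geometric invariants (angle, bias sign), it maintains the \emph{functional} invariant $F(w_t)\le F(0)-\delta$ for a constant $\delta>0$, where $F$ is the realizable part of the loss. A short Cauchy--Schwarz argument (Lemma~\ref{lem:jointprob}) shows $F(w)\le F(0)-\delta$ forces $\E[\sigma(w^\top x)\sigma(v^\top x)]\ge\delta$ and hence $\Pr[w^\top x\ge 0,\ v^\top x\ge0]\ge\delta^2/(c\|w\|^4\|v\|^4)$, with no planar integral and no angle condition; combined with anti-concentration of $\langle u,\tx\rangle$ along $u=(w-v)/\|w-v\|$ this yields the RSC bound (Lemma~\ref{lem:inner-product-lb}). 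The invariant itself is propagated by a descent argument on $F$ (not $L$): smoothness of $F$ away from $w=0$ plus the bound $\langle\nabla F,\nabla L-\nabla F\rangle\ge-\|\nabla F\|\sqrt{2\,OPT}$, together with $\|\nabla F(w_t)\|>C_G\sqrt{OPT+\eps}$ while progress is still being made (Lemma~\ref{lem:grad-opt}), gives $F(w_{t+1})\le F(w_t)$. Finally, the initialization must deliver $F(w_0)\le F(0)-\Omega(1)$, which requires more than your angle-at-most-$\pi/2$ observation: the paper needs the radial distribution to hit the right length scale and a Cantelli-type anti-concentration of $F((\rho\hat w,0))$ over the random direction. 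To repair your proposal you would either have to prove the cone/bias invariance claims (which appear false as stated) or replace them with a trajectory invariant of this functional type.
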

\anote{It's constant probability, and not high probability.}

Note that the above guarantee applies to one of the intermediate iterates produced by gradient descent within the first $\text{poly}(d, 1/\eps)$ iterations. This is consistent with other convergence guarantees for gradient descent in non-realizable settings where last iterate guarantees typically do not exist~\cite{frei2020agnostic}. One can always pick the iterate among the first $\text{poly}(d, 1/\epsilon)$ steps that has the smallest loss on an independent sample from the distribution $D$. 

The above theorem proves that gradient descent obtains a bound of $O(OPT)$ when the relative bias of the optimal ReLU function is bounded (recall that $\norm{\tw}_2 = \Theta(1)$ for the optimal classifier without loss of generality from Proposition~\ref{prop:scaling}). 
Note that we do not constrain the gradient updates to remain in the set $H$. This result improves upon the existing state-of-the-art guarantees~\cite{frei2020agnostic} of $O(\sqrt{OPT})$ for gradient descent even when specialized to the case of ReLU activations with no bias. Further this gives the first provable guarantees in the setting with non-zero bias. Please see Section~\ref{sec:overview} for the more formal statement and proof. 

We remark that some of the assumptions in Theorem~\ref{thm:main-informal} are made with a view towards a clearer exposition, and similar guarantees hold in more general settings. For example, as in some of the prior work in this area (e.g., \cite{vardi2021learning}), we prove guarantees for gradient descent on the population loss function $L(w)$, as opposed to the empirical loss function on samples. We defer the analysis for the empirical loss function to Section~\ref{sec:empirical}. We now describe the more general  
Moreover the above Theorem~\ref{thm:main-informal} states the guarantees under Gaussian marginals i.e., the distribution over $\tx$ is a standard Gaussian. This already illustrates the improvements guarantees in a basic and well-studied setting. These techniques extend to a broader class of distributions that we describe next.    



\subsection{Guarantees Beyond Gaussian marginals} \label{sec:generaldistribution}

The above algorithmic result can be generalized to a broader class of marginals than Gaussians, that we call $O(1)$-regular marginals. 

\paragraph{$O(1)$-regular marginals: Assumptions about the marginals over $\tx$}
We make the following assumptions about the marginal distribution $\tcDx$ over $\tx \in \R^d$: there exists absolute constants $\beta_1, \beta'_2,\beta_2, \beta_3, \beta_4, \beta_5>0$ and $\beta_0: \R_+ \to \R_+$, such that  
\begin{enumerate}
    \item[(i)] Approximate isotropicity and bounded fourth moments: for every unit vector $u \in \R^d$, $\E_{\tx \sim \tcDx}[\iprod{u,\tx}^2] \in [1/\beta'_2, \beta_2]$, and $\E_{\tx \sim \tcDx}[\iprod{u,\tx}^4] \le \beta_4$. 
    \item[(ii)] Anti-concentration: there exists an absolute constant $\beta_3>0$ such that for every unit vector $\tu \in \R^d$ and $\delta>0$,
    $$\sup_{t \in \R} \Pr_{\tx \sim \tcDx}\Big[ \iprod{\tu,\tx} \in (t - \delta, t+\delta) \Big] \le \min\{\beta_3 \delta,1\}.$$
    \item[(iii)] Spread out: there exists $\beta_0: \R_+ \to \R_+$ such that $\beta_0(|b_v|)>0$ is a constant when $|b_v|$ is a constant  $$\forall \tv \in \mathbf{S}^{d-1},~~\E_{\tx \sim \tcDx}\Big[\sigma(\tv^\top \tx+b_v)\Big]  \ge \beta_0(|b_v|).$$
    \item[(iv)] 2-D projections: In every $2$-dimensional subspace of $\R^d$ spanned by orthonormal unit vectors $\tu_1, \tu_2 \in \R^d$,  we have a set $G_{\tu_1, \tu_2} \subset \R$ such that , 
    \begin{align}
    \Pr_{\tx \sim \tcDx}[ \tu_2^\top \tx \in G_{\tu_1, \tu_2}] &= 1-o(1), \text{ and }\\
    \forall t \in G_{\tu_1, \tu_2},~~    \E_{\tx \sim \tcDx}\Big[\sigma(\tu_1^\top \tx) ~\big\vert~ \tu_2^\top \tx = t\Big] &\ge \beta_5 \cdot \E_{\tx \sim \tcDx}\big[\sigma(\tu_1^\top \tx) \big].
    \end{align}
    In other words, the conditional expectation of $\sigma(\tu_1^\top \tx)$ is not much smaller after conditioning on the projection  in an orthogonal direction $\tu_2$, for most values of $\tu_2^\top \tx$.  
    Note that for a Gaussian $N(0,I)$, the r.v.s $\tu_1^\top \tx, \tu_2^\top \tx$ are independent, so this condition holds with $\beta_5=1$ and $G_{\tu_1, \tu_2} = \R$. 
\end{enumerate}
We remark that Gaussian distribution $\mathcal{N}(0,I)$ is $O(1)$-regular i.e., all the constants $\beta_1, \beta_2, \beta'_2, \beta_5 =1, \beta_3 \le 2$, and $\beta_0(b_v)= \E_{g \sim N(0,1)}[\sigma(g + b_v) ] >0$ for all $b_v \in (-\infty, \infty)$; in fact $\beta_0$ is an increasing function that is $0$ only at $-\infty$. We also note that assumptions of this flavor have also been used in prior works including \cite{vardi2021learning}, which inspired parts of our analysis. In particular, Vardi et al.~\cite{vardi2021learning} assume a lower-bound on the density for any 2-dimensional marginal; our assumption (4) on the 2-dimensional marginals is qualitatively weaker (it is potentially satisfied by even discrete distributions), and moreover we only need the condition to be satisfied for a large fraction of values of $\tu_2^\top \tx$ (and not all).

We now give the generalized version of our main theorem (see Section~\ref{app:generalizing} for the proof).

\begin{theorem}[Generalization beyond Gaussian marginals]
\label{thm:main-regular-dist}
For any absolute constants $C_1 \ge 1, C_2 > 0$, there exists absolute constants $c_3 > 0, c_\eta>0$ such that the following holds. Let $\widetilde{\mathcal{D}}_x$ be a distribution over $(\tx,y) \in \mathbb{R}^d \times \mathbb{R}$ where the marginal over $\tx$ are regular with constant parameters $\beta_1, \beta_2', \beta_2, \beta_3, \beta_4, \beta_5$, and $\beta_0(b_v)$ as defined above. Let $H=\{ (\widetilde{w},b_w): \|\widetilde{w}\| \in [1/C_1, C_1],  |b_w| \leq C_2\}$, and consider population gradient descent iterates: $w_{t+1} = w_t - \eta \nabla L(w)$. For any $\eps > 0$ and learning rate $\eta = c_\eta d^{-1}$, when starting from $w_0=(\widetilde{w}_0, 0)$ where $\widetilde{w}_0$ is randomly initialized from a radially symmetric distribution, with at least constant probability $c_3 > 0$ one of the iterates $w_T$ of gradient descent after $\text{poly}(d, 1/\eps)$ steps satisfies $L(w_T) = O(OPT) + \epsilon$.
\end{theorem}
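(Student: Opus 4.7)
The plan is to reprove Theorem~\ref{thm:main-informal} in the more general $O(1)$-regular setting by replacing each Gaussian-specific step in the original proof with an appeal to one of the four regularity assumptions. The overall architecture---a one-step descent inequality, a dimension-independent bound on the gradient that licenses a constant step size, and a constant-probability initialization argument---should carry over essentially unchanged from the Gaussian analysis.

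First I would audit the Gaussian proof and locate each point where the special structure of $\calN(0,I)$ is invoked. Concretely: bounds on the second and fourth moments of $\iprod{u, \tx}$ control the Lipschitz constant of $L$ and the norm of $\nabla L$ in a dimension-independent way (to be replaced by assumption (i)); anti-concentration of one-dimensional projections prevents the ReLU activation boundary from capturing too much mass in any thin slab (replaced by (ii)); strict positivity of $\E[\sigma(\tv^\top \tx + b_v)]$ keeps the gradient non-degenerate, in particular at initialization where $b_0 = 0$ (replaced by (iii)); and independence of orthogonal one-dimensional projections is used to factor the cross-expectations appearing in the gradient computation (replaced by the weaker conditional lower bound in (iv)).

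The core technical step is a descent inequality of the form $\iprod{\nabla L(w_t), w_t - w^*} \gtrsim L(w_t) - C \cdot OPT$, where $C$ and the hidden constant depend only on $\beta_1, \beta_2, \beta_2', \beta_3, \beta_4, \beta_5$ and $\beta_0$. The gradient is
$$\nabla L(w_t) = \E_{(x,y) \sim D}\Big[\Paren{\sigma(w_t^\top x) - y}\, \mathbbm{1}\{w_t^\top x > 0\}\, x\Big],$$
and the analysis reduces, after decomposing $\tx$ into its component in $\mathrm{span}(\tw_t, \tw^*)$ and its orthogonal complement, to estimating two-dimensional integrals involving $\sigma$ on the two halfspaces defined by $(\tw_t, b_t)$ and $(\tw^*, b^*)$. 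In the Gaussian case these integrals factor; in the general case, assumption (iv) lets us lower-bound the conditional expectation of $\sigma(\tu_1^\top \tx)$ given $\tu_2^\top \tx = t$ by $\beta_5$ times its unconditional expectation, for $t$ in the good set $G_{\tu_1, \tu_2}$, paying only a $o(1)$ penalty on the complement. Summing the descent inequality over $T = \poly(d, 1/\eps)$ steps, and using the gradient-norm bound from (i) to license the choice $\eta = c_\eta/d$, yields $\min_{t \le T} L(w_t) = O(OPT) + \eps$.

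The principal obstacle is twofold. First, the bias coupling: once $b_t \neq 0$, the $\tw$- and $b$-updates interact, and the angle-between-weights potential from the zero-bias case is inadequate. I would therefore work directly with $L(w_t) - C \cdot OPT$ as the potential, using (iii) to keep $\E[\sigma(\tw_t^\top \tx + b_t)]$ bounded below throughout the relevant range of biases within $H$, and leveraging (ii) to ensure that the switching region $\{w_t^\top x \approx 0\}$ contributes only a controllable amount to both the gradient and the loss. Second, assumption (iv) is strictly weaker than Gaussian independence both in its multiplicative slack ($\beta_5 < 1$) and in being valid only on a $(1-o(1))$-fraction of conditioning values; care will be needed to track the $o(1)$ error terms so they do not accumulate across the $\poly(d, 1/\eps)$ iterations. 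The constant-probability initialization argument should follow from radial symmetry of $\tw_0$ together with (i) and (iii), much as in the Gaussian case: with probability at least $c_3 > 0$, the initial angle and magnitude are such that the descent inequality is in force from step zero.
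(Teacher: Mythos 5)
Your audit of where the Gaussian structure enters is broadly accurate, but the architecture you propose has two genuine gaps. First, the potential you say you would ``work directly with,'' namely $L(w_t) - C\cdot OPT$, is not the quantity the descent inequality controls. The inequality $\iprod{\nabla F(w_t), w_t - v} \ge \gamma\|w_t-v\|^2$ (Lemma~\ref{lem:gen:inner-product-lb}), combined with a Cauchy--Schwarz bound of $-O(\sqrt{OPT})\|w_t-v\|$ on the non-realizable part of $\nabla L$, yields a per-step decrease of $\|w_t - v\|^2$, not of $L(w_t)$; indeed $L$ need not be monotone along the trajectory, which is exactly why only an intermediate-iterate guarantee is claimed. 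The correct potential is $\|w_t-v\|^2$ in $\R^{d+1}$ (bias folded into the last coordinate), and $L(w_T)=O(OPT)+\eps$ is recovered only at the end via $L\le 2F+2\,OPT$ together with the Lipschitz bound $F(w)\le O(\|w-v\|^2)$ (Lemma~\ref{lem:f-lipschitz-regular}).

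Second, and more seriously, the correlation inequality above is valid only under the hypothesis $F(w_t)\le F(0)-\delta$, which is what guarantees via Lemma~\ref{lem:gen:jointprob} that $\Pr[w_t^\top x\ge 0,\ v^\top x\ge 0]=\Omega(1)$. Your substitute---using assumption (iii) to keep $\E[\sigma(\tw_t^\top\tx+b_t)]$ bounded below---controls only the marginal support of $w_t$ and does not prevent the iterate from drifting to a configuration where $\sigma(w_t^\top x)$ and $\sigma(v^\top x)$ have essentially disjoint supports, at which point $\nabla F(w_t)$ carries no signal about $v$. Maintaining $F(w_t)\le F(0)-\delta$ inductively is the genuinely hard part in the agnostic setting: it requires a smoothness argument on $F$ (Lemmas~\ref{lem:smooth} and~\ref{lem:smooth-expansion}) together with the lower bound $\iprod{\nabla F(w_t),\nabla L(w_t)}\ge\Omega(OPT+\eps)$, which in turn needs $\|\nabla F(w_t)\|\gtrsim\sqrt{OPT+\eps}$ while progress is still being made (Lemma~\ref{lem:grad-opt}). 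Your plan omits this invariant and its maintenance entirely. A smaller point: assumption (iv) is invoked only once, in the initialization lemma, to lower-bound the cross-term $\E_{\hw}\E_{x}[\sigma(\hw^\top x)\,\sigma(v^\top x/\norm{\tv})]$; it plays no role in the per-iteration descent, so the accumulation of $o(1)$ errors over $\poly(d,1/\eps)$ steps that you flag as a principal obstacle does not arise in the correct architecture.
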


\section{Related Work} \label{sec:related}

The agnostic ReLU regression problem that we consider has been studied in a variety of settings. In the {\em realizable} setting or when the noise is stochastic with zero mean, i.e., $\E[y|x]$ is a ReLU function, the learning problem is known as isotonic regression and can be solved efficiently via the GLM-tron algorithm \cite{kakade2011efficient, kalai2009isotron}. In the absence of any assumptions on the distribution of $y|x$, the work of \cite{goel2019learning} provided an efficient algorithm that achieves $O(OPT^{2/3}) + \epsilon$ error under Gaussian and log-concave marginals \anote{added:} in the zero-bias setting. The authors also show that it is hard to achieve an additive bound of $OPT + \epsilon$ via statistical query (SQ) algorithms \cite{kearns1994cryptographic}. For the case of zero bias and any marginal over the unit sphere, the work of \cite{goel2017reliably} provides agnostic learning algorithms for the ReLU regression problem that run in time exponential in $1/\epsilon$ and achieve an error bound of $OPT + \epsilon$.
The recent work of \cite{diakonikolas2020algorithms} improved the upper bound of \cite{goel2019learning} to $O(OPT) + \epsilon$ via designing an efficient algorithm that performs gradient descent on a convex surrogate for the loss $L(w)$; very recently they also obtained near optimal sample complexity with a regularized loss \cite{diakonikolas2022optimalsample}. Note that all of the above works that study the fully agnostic setting consider the setting where the bias terms are not present. 

Recent works of \cite{frei2020agnostic, vardi2021learning} consider analyzing gradient descent for the ReLU regression problem. The work of \cite{frei2020agnostic} provides an $O(\sqrt{OPT})$ guarantee for the case of zero bias and bounded distributions. When considering distributions such as the standard Gaussian $\mathcal{N}(0,I)$ the bound of \cite{frei2020agnostic} incurs a dimension dependent term \anote{added:} of the form $O(\sqrt{d} \cdot \sqrt{OPT})$ in the error bound. The work of \cite{vardi2021learning} provides a tighter analysis that also extends to the case of non-zero bias. However the analysis only applies in the realizable setting, i.e., when $OPT$ is zero. Our main result builds upon the above works and provide direct improvements by providing a dimension independent error bound that applies to the case of non-zero bias as well.

There is also a long line of work analyzing gradient descent for broader settings. The works of \cite{ge2015escaping, ge2017learning, jin2017escape, anandkumar2016efficient} show convergence of gradient descent updates to approximate stationary points in non-convex settings under suitable assumptions on the function being optimized. Another line of work considers the global convergence properties of gradient descent. These works establish that gradient descent on highly overparameterized neural networks converges to the global optimum of the empirical loss over a finite set of data points \cite{allen2019convergence, du2018gradient, jacot2018neural,  Zhong2017relu, chizat2018global, lee2019wide, arora2019fine}. Yet another line of work considers the realizable setting where data is generated from an unknown small depth and width neural network. These works analyze the local convergence properties of gradient descent when starting from a suitably close initial point \cite{bartlett2018gradient, zou2020global}. 

\section{Preliminaries} \label{sec:prelims}
We consider agnostically learning a single ReLU neuron with bias through gradient descent under the supervised learning setting. We assume we are given data $(x,y)$, where $x \in \R^{d+1}$ follows the standard Gaussian distribution $\mathcal{N}(0,I)$ in the first $d$ dimensions and the $d+1$'th dimension being a constant 1. We also assume the labels $y \in \R$ are arbitrarily correlated with $x$ and $\sigma(w^\top x)$. 

Note that throughout the paper, we will use $\widetilde{w}, \widetilde{v}, \widetilde{x}$ to denote the first $d$ dimensions of $w, v, x$ respectively, with the last dimension of $w$ being $b_w \in \R$ (similarly for $b_v \in \R$). Therefore, $w^\top x$ is in fact $\widetilde{w}^\top \widetilde{x} + b_w$.

In the analysis, we will compare the current iterate $w$ to any optimizer of the loss $L(w)$. 
\begin{equation}
    v \coloneqq \arg\min_{w \in H} L(w), \text{ where } L(w) = \dfrac{1}{2} \E_{(x,y) \sim D} \Big [(\sigma(w^\top x) - y)^2 \Big ], 
\end{equation}
and the hypothesis set $H=\{w=(\tw,b_w): \|\tw\|  \in [\tfrac{1}{C_1}, C_1], |b_w| \leq C_2)\}$, where $C_1$ and $C_2$ are absolute constants. This ensures that the relative bias $|b_w|/\norm{\tw}_2$ is bounded. 

As we are in the agnostic setting, there may be no $w$ that achieves zero loss. It will be useful to split the loss function $L(w)$ into two components, one of which captures how well $\sigma(w^\top x)$ performs against $\sigma(v^\top x)$. 
\begin{equation}
F(w) \coloneqq \dfrac{1}{2} \E \Big[ (\sigma(w^\top x) - \sigma(v^\top x))^2 \Big], ~~    \nabla F(w) \coloneqq \E \Big[ (\sigma(w^\top x) - \sigma(v^\top x))\sigma'(w^\top x)x \Big].
\end{equation}
We will often refer to $F(w)$ as the \textit{realizable loss}, since it captures the difference between $w$ and $v$; in the realizable setting $L(w) = F(w)$. Note that $F(v) =0$.

\paragraph{Gradient of the Loss.}

The gradient of $L(w)$ with respect to $w$ is
\begin{equation}
    \nabla L(w) = \E \Big[ (\sigma(w^\top x) - y)\sigma'(w^\top x)x \Big]
\end{equation}

where $\sigma'(\cdot)$ is the derivative of $\sigma(\cdot)$, defined as $\sigma'(z) = \mathbbm{1}\{z \geq 0\}$. Note that the ReLU function $\sigma(z)$   is differentiable everywhere except at $z=0$. Following standard convention in this literature, we define $\sigma'(0)=1$. Note that the exact value of $\sigma'(0)$ will have no effect on our results. 

We can also decompose $\nabla L(w)$ as
\begin{align}
    \nabla L(w) &= \E \Big[ (\sigma(w^\top x) - \sigma(v^\top x))\sigma'(w^\top x)x \Big] + \E \Big[ (\sigma(v^\top x) - y)\sigma'(w^\top x)x \Big]
\end{align}
 Therefore,

\begin{equation}
    \nabla L(w) = \nabla F(w) + \E \Big[ (\sigma(v^\top x) - y)\sigma'(w^\top x)x \Big]
\end{equation}

\paragraph{Gradient Descent.} Finally, our paper focuses on the standard gradient descent algorithm with a fixed learning rate $\eta>0$. We initialize at some point $w_0 \in \R^{d+1}$, and at each iteration $t \in \mathbb{N}$ we have $w_{t+1} = w_t - \eta \nabla F(w_t)$. We do not optimize the iteration count in this paper; hence it will be instructive to think of $\eta$ as a non-negligible parameter (that is at least inverse polynomial for polynomial time guarantees) that can be set to be sufficiently small. 

\paragraph{Simplification.} For sake of exposition we will assume that $\norm{\tv}_2 =1$. However, the same analysis goes through when  $\norm{\tv}_2 \in [1/C_1, C_1]$ as well. Moreover Proposition~\ref{prop:scaling} shows that assuming that $\norm{\tv}_2$ is normalized is without loss of generality. Note that we cannot make such a simplifying assumption about the vectors $w_t = (\tw_t, b_w)$ in the intermediate iterations.


\section{Analysis of Gradient Descent (Proof of Theorem~\ref{thm:main-informal})} \label{sec:overview}


In order to highlight our main contribution conceptually, we simplified the statements of the theorems and lemmas stated in the main body for exposition. Hence, in this section, we shall restate Theorem \ref{thm:main-informal} and the key lemmas formally and present the complete proof for our main theorem. The proof for $O(1)$-regular distributions is in Appendix~\ref{app:generalizing}. The formal statement of the main theorem is as follows

\paragraph{Theorem \ref{thm:main-informal}}
(Formal version of the main theorem)
\textit{For any absolute constants $C_1 \ge 1, C_2 > 0$, there exists absolute constants $c_3 > 0, c_\eta>0$ such that the following holds. Suppose $\widetilde{\mathcal{D}}_x$ be a distribution over $(\widetilde{x},y) \in \R^d \times \R$ where the marginal over $\widetilde{x}$ is the standard Gaussian $\mathcal{N}(0,I)$, $H \coloneqq \{ (\widetilde{w},b_w): \|\widetilde{w}\| \in [1/C_1, C_1],  |b_w| \leq C_2\}$, and consider population gradient descent iterates $w_{t+1} = w_t - \eta \nabla L(w)$, with the initializer  $w_0=(\widetilde{w}_0, 0)$ where $\widetilde{w}_0$ is drawn from the radially symmetric distribution in Section~\ref{sec:init}. For any $\eps > 0$ and learning rate $\eta = c_{\eta} d^{-1}$, with at least constant probability $c_3 > 0$,  one of the iterates $w_T$ of population gradient descent after $\poly(d, 1/\eps)$ steps satisfies $L(w_T) = O(OPT) + \epsilon$.}


Note that without loss of generality, we can assume $\eps \leq O(OPT)$. If we cannot make such assumption (e.g. when $OPT \approx 0$), we can use an upper bound on $OPT$ of $O(\eps)$, and carry out the same analysis. 

Recall that $v=(\tv, b_v) \in \R^{d+1}$ is any minimizer of the loss $L(w)$ i.e., $v \coloneqq \arg\min_{w \in H} L(w)$. Hence $L(v)=OPT$. We will assume in the rest of the analysis that $\| \tv \|_2 =1$ for simplifying our exposition. But this is not necessary; see Proposition~\ref{prop:scaling} for why this is without loss of generality.  

In order to find a $w \in \R^{d+1}$ such that $L(w)$ is comparable to $OPT = L(v)$, we aim to find $w$ such that it is close to $v$, i.e. $\|w - v\|$ is small. Note that approximating $v$ suffices to achieve an error close to $OPT$, since we can upper-bound $L(w)$ as

\begin{equation*}
    L(w) =~ \dfrac{1}{2} \E \Big[ (\sigma(w^\top x) - y)^2 \Big]
    =~ \dfrac{1}{2} \E \Big[ (\sigma(w^\top x) - \sigma(v^\top x) + \sigma(v^\top x) - y)^2 \Big]
\end{equation*}
\begin{equation*}
    \leq~ 2 \cdot \dfrac{1}{2} \E \Big[ (\sigma(w^\top x) - \sigma(v^\top x))^2 \Big] + 2 \cdot \dfrac{1}{2} \E \Big[ (\sigma(v^\top x) - y)^2 \Big]
    =~ 2F(w) + 2OPT
\end{equation*}
through Young's inequality. The realizable portion of the loss, $F(w)$, becomes $O(OPT)$ when $\norm{w-v} \le O(\sqrt{OPT})$ (see Lemma~\ref{lem:f-lipschitz} for a proof), and as a consequence we will get $O(OPT)$ error in total. Hence our goal is to prove that for some iterate $T$, we have $F(w_T) \le O(OPT)$ or $\norm{w_T - v}_2 \le O(\sqrt{OPT})$. 

To formalize our intuition above, we adapt a similar proof strategy used in \cite{frei2020agnostic}. Namely, we argue that when optimizing with respect to the agnostic loss $L(w_t)$, we are always making some non-trivial progress due to a decrease in $\norm{w_t-v}$ and due to a decrease in $F(w_t)$ (which is just the realizable portion of the loss). Moreover, whenever we stop making progress, we will argue that at this point either $\norm{w_t-v} \le O(\sqrt{OPT})$ or $\norm{\nabla F(w_t)} \le O(\sqrt{OPT})$; in both cases, this iterate already achieves an error of $O(OPT)$ due to Lemma~\ref{lem:f-lipschitz} and Lemma~\ref{lem:grad-opt}.

\paragraph{Challenges in arguing progress. } Unlike \cite{frei2020agnostic} which only considers ReLU neurons with zero bias, allowing non-zero bias terms imposes extra technical challenges. For example, the probability measure of $\{w^\top x \geq 0, v^\top x \geq 0\}$ under Gaussian distributions, which is vital to deriving the gain in each gradient descent step, does not have a closed-form expression when bias is present. Furthermore we cannot afford to lose any dimension dependent factors or assume boundedness. Thus, to address these difficulties, more detailed analyses (e.g. Lemma \ref{lem:jointprob}, \ref{lem:inner-product-lb}) are needed to facilitate our argument.

On the other hand, tackling non-zero bias terms requires additional assumptions when initializing $w_0$ as well. In our analysis, we assume that $w_0$ is initialized such that $F(w_0)$ is strictly less than $F(0)$ by a constant amount $\delta>0$ (this is inspired by \cite{vardi2021learning}, however $\delta$ can be inverse-polynomially small in their case). This is guaranteed with constant probability by our choice of initialization in Section~\ref{sec:init}. The high-level intuition behind such assumption is to ensure that gradient descent does not get trapped around a highly non-smooth region (e.g. when $w = 0$) by making it start at somewhere better than it, so that $w$ keeps moving closer to $v$.
Despite the same assumption, it is more challenging to implement it in our case (Lemma~\ref{lem:grad-opt}) because of the agnostic setting. 
This is because \cite{vardi2021learning} heavily relies on the realizability assumption to simplify its analysis. 

We also highlight our improvements on the dependency of the dimension $d$. In previous works, the guarantees of  the algorithm has a dependence on $d$ either explicitly or implicitly. For instance, in \cite{frei2020agnostic} the $O(\sqrt{OPT})$ guarantee for ReLU neurons includes a coefficient in terms of $B_X$ (the upper-bound for $\|x\|$), which for Gaussian inputs is in fact $\sqrt{d}$; or for example in \cite{vardi2021learning}, the gain for each gradient descent iteration $\gamma$ comes with a dependency on $c$ (the upper-bound for $\|x\|$) of $c^{-8}$, which for Gaussian is $d^{-4}$. In contrast, we avoid such dependencies on the dimension $d$ in order to obtain our guarantees.

We first establish two important lemmas we will later utilize in proving progress in each iteration. As stated in the preliminaries, we assume in the rest of the section that $\norm{\tv}_2=1$. The first lemma gives a lower bound on the measure of the region where both $\sigma(v^\top x)$ and $\sigma(w_t^\top x)$ are non-zero. 
Our inductive hypotheses will ensure that this lower bound is a constant (if $|b_v|$ is a constant). 
\begin{restatable}[Lower bound on the measure of the intersection]{lemma}{jointprob}
\label{lem:jointprob}
\label{lem:gen:jointprob}
Suppose the marginal distribution $\tcDx$ over $\tx$ is $O(1)$-regular. There exists an absolute constant $c>0$ such that for all $\delta>0$, if $F(w) \leq F(0) - \delta$ then 
\begin{equation}\label{eq:jointprob}
    \Pr[w^\top x \geq 0, v^\top x \geq 0] \geq \frac{\delta^2}{c\norm{w}_2^4 \norm{v}_2^4} = \frac{\delta^2}{c\norm{w}_2^4 (1+|b_v|^2)^2}.
\end{equation}
\end{restatable}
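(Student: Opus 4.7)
The plan is to show that the hypothesis $F(w) \le F(0) - \delta$ forces the correlation $\E[\sigma(w^\top x)\sigma(v^\top x)]$ to be at least $\delta$, and then convert this moment bound into the desired probability lower bound via Cauchy--Schwarz together with the bounded fourth-moment assumption (i) on $\tcDx$.

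First I would directly compare $F(w)$ and $F(0) = \tfrac12 \E[\sigma(v^\top x)^2]$. Expanding the square in $F(w)$ gives
\[
    F(0) - F(w) \;=\; \E[\sigma(w^\top x)\sigma(v^\top x)] \;-\; \tfrac12 \E[\sigma(w^\top x)^2],
\]
so the hypothesis immediately yields $\E[\sigma(w^\top x)\sigma(v^\top x)] \ge \delta + \tfrac12 \E[\sigma(w^\top x)^2] \ge \delta$. Since $\sigma(\cdot)$ is supported on $[0,\infty)$, the integrand vanishes off the event $E := \{w^\top x \ge 0,\, v^\top x \ge 0\}$, so a Cauchy--Schwarz step against $\mathbbm{1}_E$ gives
\[
    \delta \;\le\; \E\bigl[\sigma(w^\top x)\sigma(v^\top x)\mathbbm{1}_E\bigr] \;\le\; \sqrt{\E\bigl[\sigma(w^\top x)^2\sigma(v^\top x)^2\bigr]} \cdot \sqrt{\Pr[E]}.
\]

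It remains to bound the denominator by $O(\norm{w}^a \norm{v}^a)$ in a dimension-free way. Using $\sigma(z)^2 \le z^2$, another Cauchy--Schwarz gives $\E[\sigma(w^\top x)^2 \sigma(v^\top x)^2] \le \sqrt{\E[(w^\top x)^4]\,\E[(v^\top x)^4]}$. Writing $w^\top x = \tw^\top \tx + b_w$, the elementary inequality $(a+b)^4 \le 8(a^4 + b^4)$ together with the bounded fourth-moment property from assumption (i) gives $\E[(w^\top x)^4] \le O(\norm{\tw}^4 + b_w^4) = O(\norm{w}^4)$, and similarly for $v$. Rearranging then produces the stated inequality (after absorbing constants into $c$, and noting that the naive derivation actually yields $\delta^2/(c\norm{w}^2\norm{v}^2)$, which is stronger than the claimed form whenever $\norm{w}, \norm{v} \ge \Omega(1)$, and otherwise the claimed bound is either trivially implied by $\Pr[E] \le 1$ or by the elementary fact $\delta = O(\norm{w}\norm{v})$ coming from the correlation inequality above). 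The second equality in the claim then follows from $\norm{v}^2 = \norm{\tv}^2 + b_v^2 = 1 + b_v^2$ under the normalization $\norm{\tv} = 1$.

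The main obstacle is keeping all moment estimates dimension-free. A more naive bound such as $|w^\top x| \le \norm{w}\norm{x}$ would introduce $\E[\norm{x}^4] = \Theta(d^2)$ for Gaussian marginals and wreck the downstream analysis; the regularity assumption (i) on $\tcDx$ is precisely what lets us control the fourth moments of one-dimensional projections uniformly, replacing any $\norm{x}$ factor by the corresponding norm of the weight vector. Beyond this, the argument is just two applications of Cauchy--Schwarz and a moment expansion, with no further conceptual subtlety.
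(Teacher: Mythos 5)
Your argument is essentially the paper's: extract $\delta \le \E[\sigma(w^\top x)\sigma(v^\top x)]$ from the hypothesis, apply Cauchy--Schwarz against $\mathbbm{1}\{w^\top x \ge 0,\, v^\top x \ge 0\}$, and control the fourth moments of the one-dimensional projections via the regularity assumption, exactly as in the paper's proof of Lemma~\ref{lem:gen:jointprob}. Your second Cauchy--Schwarz step in fact produces the cleaner denominator $c\norm{w}_2^2\norm{v}_2^2$; the quartic denominator in the statement is equivalent to this only up to constants in the regime $\norm{w}_2\norm{v}_2=\Omega(1)$ that is used downstream, and your fallback for the case $\norm{w}_2\norm{v}_2<1$ is neither needed nor workable there (the quartic form is genuinely stronger than what either derivation gives in that regime), so the quadratic form is the one to keep.
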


With Lemma \ref{lem:jointprob}, the following lemma allows us to get an improvement on the realizable portion of the loss function as long as the gradient is non-negligible. We state and prove this lemma for the general case of $O(1)$-regular marginal distributions. 

\begin{restatable}[Improvement from the first order term]{lemma}{innerproductlb}
 \label{lem:inner-product-lb}
\label{lem:gen:inner-product-lb}
Suppose the marginal over $\tx$ is $O(1)$-regular. There exists absolute constants $c_1, c_2 > 0$ such that for any $\delta>0$, if $\norm{v}_2, \norm{w}_2 \le B$ and $F(w) \leq F(0) - \delta$ then 
\begin{equation}
    \langle \nabla F(w), w-v \rangle \geq  \gamma  \|w-v\|^2, \text{ where } \gamma = \frac{c_1 \delta^9}{B^{28}}.
\end{equation}
\end{restatable}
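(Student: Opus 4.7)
The plan is to reduce $\langle \nabla F(w), w-v\rangle$ to an expectation over the ``good'' region $S := \{x : w^\top x \ge 0,\, v^\top x \ge 0\}$ via a sign-based case analysis, and then lower bound this expectation by combining Lemma~\ref{lem:jointprob} with the anti-concentration (assumption (ii)) and bounded fourth moment (assumption (i)) of the $O(1)$-regular marginal.

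Expanding $\nabla F(w) = \E[(\sigma(w^\top x) - \sigma(v^\top x))\sigma'(w^\top x)x]$, I would case-split the integrand of $\langle \nabla F(w), w-v\rangle$ on the signs of $w^\top x$ and $v^\top x$. When $w^\top x < 0$, the factor $\sigma'(w^\top x)$ vanishes. When $w^\top x \ge 0,\, v^\top x < 0$, the integrand equals $w^\top x \cdot ((w-v)^\top x)$, which is nonnegative since both factors are. When $w^\top x \ge 0,\, v^\top x \ge 0$, the integrand equals $((w-v)^\top x)^2$. Discarding the nonnegative mixed case yields
\[
\langle \nabla F(w), w-v\rangle \ge \E\big[\mathbbm{1}_S \cdot ((w-v)^\top x)^2\big].
\]

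To translate $\Pr[S] \ge \delta^2/(c\|w\|^4\|v\|^4) \ge \delta^2/(cB^8)$ (from Lemma~\ref{lem:jointprob}) into a bound scaling with $\|w-v\|^2$, I would split on whether the slope part $\widetilde{w}-\widetilde{v}$ or the bias part $b_w-b_v$ dominates the decomposition $\|w-v\|^2 = \|\widetilde{w}-\widetilde{v}\|^2 + (b_w-b_v)^2$, using a small threshold $\theta = \Theta(\delta/B^4)$. In the slope-dominated subcase $\|\widetilde{w}-\widetilde{v}\|^2 \ge \theta\|w-v\|^2$, anti-concentration applied to the unit vector $(\widetilde{w}-\widetilde{v})/\|\widetilde{w}-\widetilde{v}\|$ gives $\Pr[|(w-v)^\top x|\le\tau] \le 2\beta_3\tau/\|\widetilde{w}-\widetilde{v}\|$, so picking $\tau$ proportional to $\|\widetilde{w}-\widetilde{v}\|\cdot\Pr[S]$ keeps $\Pr[S\cap\{|(w-v)^\top x|>\tau\}]\ge \Pr[S]/2$ and yields $\E[\mathbbm{1}_S((w-v)^\top x)^2] \ge \Omega(\theta\|w-v\|^2\cdot\delta^6/B^{24})$. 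In the bias-dominated subcase $\|\widetilde{w}-\widetilde{v}\|^2 < \theta\|w-v\|^2$, we have $|b_w-b_v| = \Omega(\|w-v\|)$, and Markov applied to the fourth-moment bound gives $\Pr[|(\widetilde{w}-\widetilde{v})^\top\widetilde{x}| > |b_w-b_v|/2] \le 16\beta_4\|\widetilde{w}-\widetilde{v}\|^4/|b_w-b_v|^4$; the choice $\theta = \Theta(\delta/B^4)$ keeps this tail within $\Pr[S]/2$, and the triangle inequality $|(w-v)^\top x|\ge |b_w-b_v|/2$ then produces $\E[\mathbbm{1}_S((w-v)^\top x)^2] \ge \Omega(\|w-v\|^2\cdot\delta^2/B^8)$. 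Balancing the two subcase bounds and tracking constants gives the claimed $\gamma$ of polynomial order in $\delta/B$, matching $c_1\delta^9/B^{28}$ up to the bookkeeping of probabilistic slack across the subcases.

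The main obstacle is the non-zero bias: anti-concentration alone is inadequate when $\|\widetilde{w}-\widetilde{v}\|$ becomes small relative to $|b_w-b_v|$ (the bound $2\beta_3\tau/\|\widetilde{w}-\widetilde{v}\|$ degenerates), which forces a separate fourth-moment tail argument. Moreover, both subcase arguments must keep their failure probabilities strictly below the modest $\Pr[S] = \Omega(\delta^2/B^8)$, so the case-split threshold $\theta$ must be simultaneously small enough for the fourth-moment bound to fit inside $\Pr[S]/2$ and large enough to give a useful $\theta \|w-v\|^2$ factor in the slope-dominated subcase. This two-sided balancing is the central technical content and explains why $\gamma$ is only polynomially small in $\delta$ and $1/B$.
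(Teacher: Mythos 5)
Your proposal is correct and follows essentially the same route as the paper: reduce to $\E[\mathbf{1}_S((w-v)^\top x)^2]$ over the joint-positivity region $S$, invoke Lemma~\ref{lem:jointprob} for $\Pr[S]$, and then dichotomize on whether the slope $\tilde{w}-\tilde{v}$ is large enough for anti-concentration to apply or small enough that the bias dominates and a fourth-moment/Markov tail bound suffices; your threshold $\theta=\Theta(\delta/B^4)$ on $\|\tilde{w}-\tilde{v}\|^2/\|w-v\|^2$ coincides with the paper's cutoff $\|\tilde{u}\|\lesssim (q)^{1/4}$ for the unit vector $u$ along $w-v$. The only cosmetic difference is that you work with $w-v$ unnormalized and lower-bound the bias-dominated case by $(|b_w-b_v|/2)^2$ directly rather than reusing a single $\tau$ in both branches, which if anything yields a marginally sharper exponent.
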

The constants $c_1, c_2$ depend on the constants $\beta_1, \beta'_2, \beta_2, \beta_4$ etc. in the regularity assumption of $\tcDx$. 
We remark that for our setting of parameters $\delta = \Omega(1)$ and $B=O(1)$, and hence we will conclude that $\iprod{\nabla F, w-v} \ge \Omega( \norm{w-v}_2^2)$. 

\begin{proof}
This lemma only concerns the ``realizable portion'' of the loss function $F(w)$. 

Let $u=(\tu, b_u) \in \R^{d+1}$ be the unit vector along $w-v$. We have
\begin{align}
    \iprod{\nabla F(w), w-v} & = \E\Big[ \big(\sigma(w^\top x) - \sigma(v^\top x)\big) \sigma'(w^\top x) (w^\top x - v^\top x)\Big] \nonumber \\
    &= \E\Big[ \big(w^\top x - v^\top x\big)^2 \mathbf{1}[w^\top x \ge 0,  v^\top x \ge 0]\Big] \nonumber\\
    &\qquad + \E\Big[ w^\top x \big(w^\top x - v^\top x \big)  \mathbf{1}[w^\top x \ge 0,  v^\top x < 0] \Big]\nonumber \\
    &\ge  \E\Big[ \big(w^\top x - v^\top x\big)^2 \mathbf{1}[w^\top x \ge 0,  v^\top x \ge 0]\Big] \nonumber\\
    &= \norm{w-v}_2^2 \cdot \E\Big[ (u^\top x)^2  \mathbf{1}[w^\top x \ge 0,  v^\top x \ge 0]\Big] \label{eq:in-lb:temp1}
\end{align}

Let $q \coloneqq c \delta^2 / B^8 $ and $\tau \coloneqq c' \frac{\delta^4}{B^{16} }$ for some sufficiently small absolute constant $c'>0$ that will be chosen later. 
We will now lower bound the contribution from just the samples that achieve a value $(u^\top x)^2 > \tau^2$ using Lemma~\ref{lem:jointprob}, that lower bounds $\Pr[w^\top x \ge 0, v^\top x \ge 0] \ge c \delta^2/ B^8=q$:
\begin{align}
&\E\Big[ (u^\top x)^2  \mathbf{1}[w^\top x \ge 0,  v^\top x \ge 0]\Big]  \nonumber \\
&\ge \tau^2 \cdot \Pr_x \Big[ w^\top x \ge 0,  v^\top x \ge 0, (u^\top x)^2 \ge \tau^2 \Big] \nonumber\\
&\ge \tau^2 \cdot \Big(\Pr_x \big[ w^\top x \ge 0,  v^\top x \ge 0 \big] - \Pr_x\big[ (u^\top x)^2 < \tau^2 \big] \Big) \nonumber\\
&\ge \tau^2 \cdot \Big( q - \Pr_{\tx \sim N(0,I)}[|\tu^\top x+b_u|< \tau] \Big). \label{eq:in-lb:temp2}
\end{align}
Now we just need to upper bound $\Pr_{\tx}[|\tu^\top \tx + b_u| < \tau]$.  Let $\beta = \norm{\tu}_2$. 
If $\beta=\norm{\tu}_2 \ll |b_u|$, then $|b_u|$ is itself large, and $\tu^\top \tx$ is too small in comparison to bring down $|\tu^\top \tx + b_u|$. On the other hand, if $\beta = \norm{\tu}_2$ is not too small, then the anti-concentration (or spread out density) of the distribution $\tcDx$ ensures that $|\tu^\top \tx + b_u|$ is small with very low probability. We now formalize this intuition.  

Suppose $\beta = \norm{\tu}_2  \le \tfrac{1}{4\beta_4}(q/2)^{1/4}$. Then $|b_u|>1/2$, since $\norm{u}_2=1$. Also from our choice, $\tau< 1/4$.  Hence by the bounded fourth moments property of $\tcDx$ and Markov's inequality,
$$ \Pr_{\tx \sim \tcDx}[ |\tu^\top \tx + b_u|< \tau ] \le \Pr_{\tx \sim \tcDx}[ |\tu^\top \tx| > \tfrac{1}{4} ] \le \frac{\E_{\tx \sim \tcDx}[\iprod{\tu,\tx}^4]}{(1/4)^4} \le \beta_4 (4\norm{\tu}_2)^4 \le \frac{q}{2}. $$

On the other hand, if $\beta = \norm{\tu}_2 > \tfrac{1}{4 \beta_4}(q/2)^{1/4}$. Suppose $\hat{u}$ is the unit vector along $\tu$. Then using the fact that $\hat{u}^\top \tx = \tu^\top \tx / \norm{\tu}$ is anti-concentrated by the properties of $\tcDx$. Hence we have for some  constant $\beta_3>0$
\begin{align*}
    \Pr_{\tx \sim \tcDx}\Big[ |\tu^\top \tx + b_u|< \tau \Big] &= \Pr_{\tx \sim \tcDx}\Big[ \hat{u}^\top \tx \in ( \tfrac{b_u-\tau}{\norm{\tu}} - \tfrac{b_u+\tau}{\norm{\tu}}) \Big] \le \frac{\beta_3 \tau}{\norm{\tu}} \le 32\beta_3 \beta_4 \Big(\frac{2}{q}\Big)^{1/4} \tau  < \frac{q}{2},
\end{align*}
from our choice of parameters since $\tau = c' \delta^{5/2}/(B^{10} \beta_3 \beta_4)$ for a sufficiently small $c'>0$. Substituting back in \eqref{eq:in-lb:temp2} and \eqref{eq:in-lb:temp1} we have
$$ \iprod{\nabla F(w), w-v} \ge \tau^2 \cdot \frac{q}{2} \ge c_1 \norm{w-v}_2^2 \cdot \frac{\delta^5 }{B^{20} \beta_3^2 \beta_4^2} \cdot \frac{\delta^2}{B^8} \ge c_1 \norm{w-v}_2^2 \cdot \frac{\delta^9}{B^{28}}.$$
\end{proof}

\subsection{Main proof strategy} \label{sec:strategy}
With these two key lemmas, we are now ready to discuss the proof overview of the main theorem (Theorem~\ref{thm:main-informal}). 
We inductively maintain two invariants in every iteration of the algorithm:
\[ \text{(A)}~~~ \norm{w_t -v}_2 \le O(1),~~~~\text{ and } ~~~\text{(B)}~~ F(0) - F(w_t) = \Omega(1). \label{eq:invariants}\]
These two invariants are true at $t=0$ due to our initialization $w_0$. Lemma~\ref{lem:init} guarantees with at least constant probability $\Omega(1)$, both the invariants hold for $w_0$. 
The proof that both the invariants continue to hold follows from the progress made by the algorithm due to a decrease in both $\norm{w_t-v}_2$ and $F(w_t)$ (note that we only need to show they do not increase to maintain the invariant). 

The argument consists of two parts.
First, assuming $F(w_t) \leq F(0) - \delta$ holds (for some constant $\delta>0$), we establish that whenever $\|w_t-v\|^2 > \gamma OPT$ for some constant $\gamma > 0$, gradient descent always makes progress i.e. $\|w_t - v\|^2 - \|w_{t+1} - v\|^2$ is lower bounded. 
Next, we argue that if $w_0$ is initialized such that $F(w_0) \leq F(0) - \delta$ for some constant $\delta > 0$, then throughout gradient descent $F(w_t)$ always decreases, i.e. the inequality $F(w_t) \leq F(w_0) \leq F(0) - \delta$ always holds.

However, unlike \cite{vardi2021learning} where they focus on the realizable setting, analyzing gradient descent on the agnostic loss $L(w)$ is more challenging, since the update depends on $\nabla L(w)$ and not $\nabla F(w)$. In fact, the additional term from the ``non-realizable'' portion of the loss $L(w)$ can overwhelm the contribution from the realizable loss when either $\norm{\nabla F}_2 \le O(\sqrt{OPT})$ or $\norm{w_t - v}_2 \le O(\sqrt{OPT})$. The following two lemmas argue that in both of these cases, the current iterate already achieves $O(OPT)$ error (and this iterate will be the $T$ that satisfies the guarantee of Theorem~\ref{thm:main-informal}). 

\begin{restatable}[Success if $\norm{\nabla F} \le O(\sqrt{OPT})$]{lemma}{gradopt}
\label{lem:grad-opt}
Suppose $B , \delta>0$ are constants such that $\norm{v}_2, \norm{w}_2 \le B$ and $F(w) \leq F(0) - \delta$. Then there exists a constant $C_G>0$, such that if $\|\nabla F(w)\| \leq C_G \sqrt{OPT}$ then $\|w-v\|_2 \leq O(\sqrt{OPT})$.
\end{restatable}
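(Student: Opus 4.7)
The plan is to derive this lemma as a short consequence of Lemma~\ref{lem:inner-product-lb} combined with the Cauchy--Schwarz inequality. Since the hypotheses of Lemma~\ref{lem:grad-opt} (namely $\norm{v}_2, \norm{w}_2 \le B$ and $F(w) \le F(0) - \delta$) are exactly the hypotheses of Lemma~\ref{lem:inner-product-lb}, we may directly invoke the lower bound
\[
    \iprod{\nabla F(w), w - v} \;\ge\; \gamma\,\|w-v\|_2^2, \qquad \gamma = \frac{c_1 \delta^9}{B^{28}},
\]
and crucially, since $B$ and $\delta$ are assumed to be constants in the statement of Lemma~\ref{lem:grad-opt}, the quantity $\gamma$ is itself an absolute constant.

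Next, I would upper bound the same inner product by Cauchy--Schwarz, giving
\[
    \iprod{\nabla F(w), w-v} \;\le\; \|\nabla F(w)\|_2 \cdot \|w-v\|_2 \;\le\; C_G \sqrt{OPT} \cdot \|w-v\|_2,
\]
where the last step uses the hypothesis $\|\nabla F(w)\|_2 \le C_G \sqrt{OPT}$. Chaining the two inequalities yields $\gamma \|w-v\|_2^2 \le C_G \sqrt{OPT} \cdot \|w-v\|_2$. If $\|w-v\|_2 = 0$ the conclusion is trivial; otherwise dividing both sides by $\|w-v\|_2$ gives
\[
    \|w-v\|_2 \;\le\; \frac{C_G}{\gamma}\, \sqrt{OPT} \;=\; O(\sqrt{OPT}),
\]
since $C_G/\gamma$ is an absolute constant. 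This establishes the claim; in fact, the constant $C_G$ in the hypothesis can be any fixed positive number, as it only influences the hidden constant inside $O(\sqrt{OPT})$ in the conclusion.

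There is no real obstacle here: the entire content of the lemma is already packed into Lemma~\ref{lem:inner-product-lb}, and what remains is a one-line application of Cauchy--Schwarz. The only point worth being careful about is verifying that the hypotheses of Lemma~\ref{lem:inner-product-lb} are satisfied and that $\gamma$ depends only on the absolute constants $\delta, B$ (and the regularity parameters of the marginal), so that the ratio $C_G/\gamma$ is itself an absolute constant that can be absorbed into the $O(\cdot)$ notation.
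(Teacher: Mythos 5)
Your proposal is correct and is essentially identical to the paper's own proof: both invoke Lemma~\ref{lem:inner-product-lb} to get $\iprod{\nabla F(w), w-v} \ge \gamma \|w-v\|_2^2$, apply Cauchy--Schwarz to upper bound the same inner product by $\|\nabla F(w)\|_2\|w-v\|_2 \le C_G\sqrt{OPT}\,\|w-v\|_2$, and divide. Your write-up is if anything slightly more careful (handling the $\|w-v\|_2=0$ case and noting that $C_G/\gamma$ is an absolute constant).
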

\begin{proof}
We can first apply Lemma \ref{lem:inner-product-lb} to conclude that
\begin{equation*}
    \langle \nabla F(w), w - v \rangle \geq~ \gamma \|w - v\|^2,
\end{equation*}
for some constant $\gamma>0$ (since $B, \delta>0$ are constants). Hence
\begin{equation*}
    \|\nabla F(w)\| \|w - v\| \geq~ \langle \nabla F(w), w - v \rangle \geq~ \gamma \|w - v\|^2, 
\end{equation*}
Thus $\norm{w-v}_2  = O(\sqrt{OPT})$
which implies the lemma.
\end{proof}

We now argue that if $\|w_t - v\| \leq O(\sqrt{OPT})$, then $F(w_t) \leq O(OPT)$ through the following lemma; this is stated and proven for $O(1)$-regular distributions.

\begin{restatable}[Small $\|w_t - v\|$ implies small $F(w_t)$]{lemma}{flipschitz}
\label{lem:f-lipschitz}
\label{lem:f-lipschitz-regular}
Assume $\widetilde{\mathcal{D}}_x$ is $O(1)$-regular with parameters defined above. If $\|w_t - v\|_2 \leq O(\sqrt{OPT + \eps})$ for some $\eps > 0$, then $F(w_t) \leq O(OPT + \eps)$.
\end{restatable}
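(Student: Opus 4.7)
The plan is to use the fact that the ReLU activation is $1$-Lipschitz together with the bounded second-moment assumption built into $O(1)$-regularity. Concretely, since $|\sigma(a)-\sigma(b)| \le |a-b|$ for all $a,b \in \R$, we immediately get
\begin{equation*}
    F(w_t) \;=\; \tfrac{1}{2}\E\bigl[(\sigma(w_t^\top x) - \sigma(v^\top x))^2\bigr] \;\le\; \tfrac{1}{2}\E\bigl[((w_t - v)^\top x)^2\bigr].
\end{equation*}
So it suffices to show that $\E[((w_t - v)^\top x)^2] \le O(\norm{w_t-v}_2^2)$, and then invoke the hypothesis $\norm{w_t-v}_2^2 \le O(OPT + \eps)$.

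The next step is to unpack the inner product by separating the original $d$ coordinates from the appended bias coordinate. Writing $u = w_t - v = (\tu, b_u) \in \R^{d+1}$ and $x = (\tx, 1)$, we have $u^\top x = \tu^\top \tx + b_u$, so
\begin{equation*}
    \E\bigl[(u^\top x)^2\bigr] \;=\; \E\bigl[(\tu^\top \tx)^2\bigr] \;+\; 2 b_u \E\bigl[\tu^\top \tx\bigr] \;+\; b_u^2.
\end{equation*}
Assumption (i) of $O(1)$-regularity (approximate isotropicity and bounded second moments) gives $\E[(\tu^\top \tx)^2] \le \beta_2 \norm{\tu}_2^2$, and Cauchy--Schwarz yields $|\E[\tu^\top \tx]| \le \sqrt{\beta_2}\,\norm{\tu}_2$. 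Hence by AM--GM,
\begin{equation*}
    \E\bigl[(u^\top x)^2\bigr] \;\le\; \beta_2 \norm{\tu}_2^2 + 2\sqrt{\beta_2}\,|b_u|\,\norm{\tu}_2 + b_u^2 \;\le\; 2\beta_2 \norm{\tu}_2^2 + 2 b_u^2 \;\le\; 2(\beta_2+1)\,\norm{u}_2^2.
\end{equation*}

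Combining the two displays gives $F(w_t) \le (\beta_2 + 1)\norm{w_t - v}_2^2$, and then substituting the hypothesis $\norm{w_t - v}_2^2 \le O(OPT + \eps)$ yields $F(w_t) \le O(OPT + \eps)$, as required. There is no real obstacle here: the argument rests entirely on the $1$-Lipschitzness of the ReLU and on the raw second-moment bound in assumption (i). The only subtlety worth flagging is that $\tcDx$ is not required to be centered, so one cannot simply drop the cross term $2b_u\E[\tu^\top\tx]$; Cauchy--Schwarz (rather than any mean-zero property) is what handles it cleanly.
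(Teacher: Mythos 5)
Your proof is correct and follows essentially the same route as the paper: $1$-Lipschitzness of the ReLU followed by a second-moment bound on $(u^\top x)^2$ from regularity assumption (i). The paper simply applies Young's inequality $(\tu^\top\tx+b_u)^2 \le 2(\tu^\top\tx)^2+2b_u^2$ (which also needs no centering), whereas you expand the square and handle the cross term by Cauchy--Schwarz and AM--GM; the two are equivalent.
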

\begin{proof}
Since ReLU function is 1-Lipschitz (i.e. $|\sigma(z) - \sigma(z')| \leq |z - z'|$),
\begin{equation*}
    F(w_t) = \dfrac{1}{2} \E \Big[ (\sigma(w_t^\top x) - \sigma(v^\top x))^2 \Big]
    \leq~ \dfrac{1}{2} \E \Big[ (w_t^\top x - v^\top x)^2 \Big]
    =~ \dfrac{\|w_t - v\|^2}{2} \E \Big[ (u^\top x)^2 \Big]
\end{equation*}

where we defined $u = \frac{w_t - v}{\|w_t - v\|}$, hence the last equation. 
Now, notice by using Young's inequality, we get
\begin{equation*}
    \E \Big[ (u^\top x)^2 \Big] =~ \E \Big[ (\widetilde{u}^\top \widetilde{x} + b_u)^2 \Big]
    \leq~ 2\E \Big[ (\widetilde{u}^\top \widetilde{x})^2 \Big] + 2b_u^2
    \leq~ 2\beta_2 + 2b_u^2 \leq~ O(1)
\end{equation*}
due to the regularity assumption on $\widetilde{\mathcal{D}}_x$. Hence
\begin{equation*}
    F(w_t) \leq~ \dfrac{\|w_t - v\|^2}{2} \cdot O(1) 
    \leq~ O(\|w_t - v\|_2^2) 
    \leq~ O(OPT + \eps)
\end{equation*}
which concludes the proof.
\end{proof}


\paragraph{Proving progress in $\norm{w_t -v}$ and $F(w_t)$.}

To show $\norm{w_t-v}$ decreases, we establish the following lemma. 

\begin{restatable}[Decrease in $\norm{w_t-v}_2$]{lemma}{induction}
\label{lem:induction}
Assume at time $t$, $F(w_t) \leq F(0) - \delta$ where $\delta > 0$ is a constant. For constants $C_p, C' > 0$ and $\gamma$ defined as in Lemma \ref{lem:inner-product-lb}, if for some $\eps > 0$ $\|w_t - v\|^2 > \gamma^{-1} C_p^2 (OPT + \eps)$, then $\|w_{t+1} - v\|^2 \leq \|w_t - v\|^2 - \eta C' (OPT + \eps)$.
\end{restatable}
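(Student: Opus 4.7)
\textbf{Proof plan for Lemma~\ref{lem:induction}.}
The natural starting point is the standard gradient-descent identity
\[
\|w_{t+1}-v\|_2^{2} = \|w_t-v\|_2^{2} - 2\eta\,\langle \nabla L(w_t),\, w_t-v\rangle + \eta^{2}\,\|\nabla L(w_t)\|_2^{2},
\]
so the task reduces to (i) lower bounding $\langle \nabla L(w_t),w_t-v\rangle$ and (ii) upper bounding $\|\nabla L(w_t)\|_2^{2}$, and then choosing the constants $C_p$ and the learning rate $\eta$ appropriately.

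For (i), I would use the decomposition $\nabla L(w_t) = \nabla F(w_t) + \E[(\sigma(v^\top x)-y)\sigma'(w_t^\top x) x]$ given in the preliminaries. The realizable piece is handled by Lemma~\ref{lem:inner-product-lb}, which (since by invariant (A) we have $\|w_t\|_2,\|v\|_2 \le B = O(1)$ and by invariant (B) we have $F(w_t)\le F(0)-\delta$) gives
\[
\langle \nabla F(w_t),\, w_t-v\rangle \;\ge\; \gamma\,\|w_t-v\|_2^{2}.
\]
The noise piece is bounded via Cauchy--Schwarz: writing $u = (w_t-v)/\|w_t-v\|_2$,
\[
\bigl|\E[(\sigma(v^\top x)-y)\sigma'(w_t^\top x)\, x^\top(w_t-v)]\bigr|
\;\le\; \sqrt{\E[(\sigma(v^\top x)-y)^2]}\cdot \sqrt{\E[(u^\top x)^2]}\cdot\|w_t-v\|_2
\;\le\; C_N\sqrt{OPT}\,\|w_t-v\|_2,
\]
where $C_N = O(1)$ uses $\E[(\sigma(v^\top x)-y)^2] = 2\,OPT$ together with the bounded second-moment assumption on $\widetilde{\mathcal{D}}_x$ (as in the proof of Lemma~\ref{lem:f-lipschitz}). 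Combining,
\[
\langle \nabla L(w_t),\, w_t-v\rangle \;\ge\; \gamma\,\|w_t-v\|_2^{2} - C_N\sqrt{OPT}\,\|w_t-v\|_2.
\]

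For (ii), the same Cauchy--Schwarz/Lipschitz argument bounds each piece: $\|\nabla F(w_t)\|_2^{2} \le O(F(w_t))\le O(\|w_t-v\|_2^{2})$ by Lemma~\ref{lem:f-lipschitz}, and $\|\E[(\sigma(v^\top x)-y)\sigma'(w_t^\top x)x]\|_2^{2} \le O(OPT)$. Hence
\[
\|\nabla L(w_t)\|_2^{2} \;\le\; K\bigl(\|w_t-v\|_2^{2} + OPT\bigr) \quad\text{for some constant } K>0.
\]

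Now I would combine everything and pick constants. The main obstacle --- and the only genuinely delicate step --- is handling the cross term $C_N\sqrt{OPT}\,\|w_t-v\|_2$, which is linear (not quadratic) in $\|w_t-v\|_2$ and could in principle dominate the realizable gain. The key observation is that the hypothesis $\|w_t-v\|_2^{2} > \gamma^{-1}C_p^{2}(OPT+\eps)$ means $\|w_t-v\|_2 \ge \gamma^{-1/2}C_p\sqrt{OPT+\eps}$, so
\[
\gamma\,\|w_t-v\|_2^{2}
\;\ge\; \tfrac{\gamma}{2}\,\|w_t-v\|_2^{2} + \tfrac{1}{2}\gamma^{1/2}C_p\,\|w_t-v\|_2\sqrt{OPT+\eps}
\;\ge\; \tfrac{\gamma}{2}\,\|w_t-v\|_2^{2} + C_N\sqrt{OPT}\,\|w_t-v\|_2,
\]
provided $C_p$ is chosen so that $\tfrac{1}{2}\gamma^{1/2}C_p\ge C_N$, i.e.\ $C_p\ge 2C_N/\gamma^{1/2}$. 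This absorbs the cross term and yields $\langle \nabla L(w_t),w_t-v\rangle \ge \tfrac{\gamma}{2}\|w_t-v\|_2^{2} \ge \tfrac{C_p^{2}}{2}(OPT+\eps)$. Plugging both bounds into the expansion gives
\[
\|w_{t+1}-v\|_2^{2} \;\le\; \|w_t-v\|_2^{2} - \eta\,\Bigl[\gamma\,\|w_t-v\|_2^{2} - \eta K(\|w_t-v\|_2^{2}+OPT)\Bigr]
\;\le\; \|w_t-v\|_2^{2} - \tfrac{\eta\gamma}{2}\,\|w_t-v\|_2^{2},
\]
whenever $\eta K\le \gamma/4$ and $C_p$ is further enlarged so that $\tfrac{\gamma}{4}\|w_t-v\|_2^{2}$ dominates the $\eta K\cdot OPT$ slack. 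Using $\|w_t-v\|_2^{2}\ge \gamma^{-1}C_p^{2}(OPT+\eps)$ one last time converts this into the desired bound
\[
\|w_{t+1}-v\|_2^{2} \;\le\; \|w_t-v\|_2^{2} - \eta\,C'(OPT+\eps),
\]
with $C' = C_p^{2}/4$ (say). The learning-rate constraint $\eta K\le \gamma/4$ is where the $\eta = c_\eta/d$ scaling will ultimately be cashed in once the constants in the upper bound on $\|\nabla L\|_2^{2}$ are tracked carefully, but for this population-level statement any sufficiently small constant $\eta$ suffices.
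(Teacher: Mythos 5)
Your proposal is correct and follows the same route as the paper's proof: the same expansion of $\|w_{t+1}-v\|^2$, the same decomposition $\nabla L = \nabla F + H$ with Lemma~\ref{lem:inner-product-lb} for the realizable part and Cauchy--Schwarz for the noise part, and the same absorption of the linear cross term $\sqrt{OPT}\,\|w_t-v\|$ using the hypothesis $\|w_t-v\|^2 > \gamma^{-1}C_p^2(OPT+\eps)$, which is indeed the one delicate step. The only substantive difference is in bounding $\|\nabla L(w_t)\|^2$: the paper uses the crude bound $\|\nabla F(w_t)\| \le \E[|w_t^\top x - v^\top x|\cdot\|x\|] \le \sqrt{d+1}\,\|w_t-v\|$ (and similarly for $H$), obtaining $\|\nabla L(w_t)\|^2 \le 4d(\|w_t-v\|^2 + OPT)$ and hence needing $\eta = c_\eta d^{-1}$, whereas your dual-norm argument $\|\nabla F(w)\| = \sup_{\|z\|=1}\langle\nabla F(w),z\rangle \le \sqrt{2F(w)}\cdot\sqrt{\E[(z^\top x)^2]}$ gives a dimension-free bound $K(\|w_t-v\|^2+OPT)$, so a sufficiently small \emph{constant} learning rate suffices for this population-level lemma. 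Both are valid; yours is slightly sharper in the dimension dependence of $\eta$.
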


Lemma \ref{lem:induction} is a direct consequence of the following inductive statement: for every $t$, either (a) $\|w_t - v\|^2 - \|w_{t+1} - v\|^2 \geq  C \eta OPT$ is true for some constant $C > 0$ or (b) $\|w_t - v\|^2 \leq O(\gamma^{-1} OPT)$ holds. Observe that when (b) holds Lemma~\ref{lem:f-lipschitz} implies the loss is $O(OPT)$; hence we need only assume at time $t$ (b) does not hold yet, thus it suffices focusing on showing (a) is true.

Additionally, note at each timestep $t$,
\begin{align}
    w_{t+1} &= w_t - \eta \nabla L(w_t)\\
     w_{t+1} - v &= w_t - v - \eta\nabla L(w_t)\\
    \Longrightarrow~ \|w_t - v\|^2 - \|w_{t+1} - v\|^2 &= 2\eta \langle \nabla L(w_t), w_t - v \rangle - \eta^2 \|\nabla L(w_t)\|^2.
\end{align}

Therefore, to lower-bound $\|w_t - v\|^2 - \|w_{t+1} - v\|^2$, we will give a lower bound for $\langle \nabla L(w_t), w_t - v \rangle$ and an upper bound for $\|\nabla L(w_t)\|^2$.

To show that $F(w_t)$ decreases we show that at time $t$, if gradient descent continues to make progress towards $v$, then $F(w_{t+1}) \leq F(w_t) \le F(0) - \delta$. 

We can use as a black box two lemmas given in \cite{vardi2021learning} that uses the smoothness of the function to upper bound the contribution from the second order term. 


\begin{restatable}[Lemma D.4 in \cite{vardi2021learning}]{lemma}{smooth}
\label{lem:smooth}
For any $w, w' \in \R^{d+1}$, if $\widetilde{x}$ follows a $O(1)$-regular distribution and $~\forall \lambda \in [0,1]$ there exists constants $C_\ell, C_u > 0$ such that $\|(1-\lambda)w + \lambda w'\| \in [C_\ell, C_u]$, then $\| \nabla F(w) - \nabla F(w') \| \leq (c_1' + \dfrac{8\beta_3(C_u + \sqrt{C_1^2 + C_2^2})c_2'}{C_\ell}) \cdot \|w-w'\|$ where $c_1', c_2' > 0$ are absolute constants.
\end{restatable}

Note that the original Lemma D.4 in \cite{vardi2021learning} assumes the distribution of $\widetilde{x}$ is compactly supported. We hereby provide a dimension-free bound that generalizes the lemma statement to $O(1)$-regular distributions. 

\begin{proof}
Similar to the argument in \cite{vardi2021learning}, we write $\|\nabla F(w) - \nabla F(w')\|$ as
\begin{align*}
    \|\nabla F(w) - \nabla F(w')\| &= \Big \|\E \Big[ (\sigma(w^\top x) - \sigma(v^\top x)) \sigma'(w^\top x) x \Big] - \E \Big[ (\sigma(w'^\top x) - \sigma(v^\top x)) \sigma'(w'^\top x) x \Big] \Big \| \\
    &\leq \Big \| \E \Big [ \mathbbm{1}\{w^\top x \geq 0, w'^\top x \geq 0\} ((w - w')^\top x) x \Big ] \Big \| \\
    &+ \Big \| \E \Big [ \mathbbm{1}\{w^\top x \geq 0, w'^\top x < 0\} (w^\top x - \sigma(v^\top x)) x \Big ] \Big \| \\
    &+ \Big \| \E \Big [ \mathbbm{1}\{w^\top x < 0, w'^\top x \geq 0\} (w'^\top x - \sigma(v^\top x)) x \Big ] \Big \| \\
    &\leq \E \Big [ \| ((w - w')^\top x) x \| \Big ] \\
    &+ \E \Big [ \mathbbm{1}\{w^\top x \geq 0, w'^\top x < 0\} \| (w^\top x - \sigma(v^\top x)) x \| \Big ] \\
    &+ \E \Big [ \mathbbm{1}\{w^\top x < 0, w'^\top x \geq 0\} \| (w'^\top x - \sigma(v^\top x)) x \| \Big ] \\
\end{align*}

Note that we can bound the above three terms similarly as \cite{vardi2021learning} by conditioning on the event in which $\|x\|$ is given, where $f_{\|x\|}$ is the p.d.f. of $\|x\|$.
\begin{align*}
    &= \int \E \Big [ \| ((w - w')^\top x) x \| ~|~ \|x\| \Big ] f_{\|x\|} dx \\
    &+ \int \E \Big [ \mathbbm{1}\{w^\top x \geq 0, w'^\top x < 0\} \| (w^\top x - \sigma(v^\top x)) x \| ~|~ \|x\| \Big ] f_{\|x\|} dx \\
    &+ \int \E \Big [ \mathbbm{1}\{w^\top x < 0, w'^\top x \geq 0\} \| (w'^\top x - \sigma(v^\top x)) x \| ~|~ \|x\| \Big ] f_{\|x\|} dx \\
    &\leq \|w - w'\| \int \|x\|^2 f_{\|x\|} dx \\
    &+ (\|w\| + \|v\|) \int \E \Big [ \mathbbm{1}\{w^\top x \geq 0, w'^\top x < 0\} ~|~ \|x\| \Big ] \|x\|^2 f_{\|x\|} dx \\
    &+ (\|w'\| + \|v\|) \int \E \Big [ \mathbbm{1}\{w^\top x < 0, w'^\top x \geq 0\} ~|~ \|x\| \Big ] \|x\|^2 f_{\|x\|} dx \\
\end{align*}

We can directly bound both $\mathbb{P}\{w^\top x \geq 0, w'^\top x < 0 ~|~ \|x\|\}$ and $\mathbb{P}\{w^\top x <0, w'^\top x \geq 0 ~|~ \|x\|\}$ by $4\beta_3/C_\ell \cdot \|w - w'\| \cdot \|x\|$ using the same argument in the proof of Lemma D.4 of \cite{vardi2021learning}, hence the above can be bounded as
\begin{align*}
    &\leq \|w - w'\| \int \|x\|^2 f_{\|x\|} dx + 2(C_u + \sqrt{C_1^2 + C_2^2}) \cdot \dfrac{4\beta_3}{C_\ell} \|w - w'\| \int \|x\|^3 f_{\|x\|} dx \\
    &\leq (c_1' + \dfrac{8\beta_3(C_u + \sqrt{C_1^2 + C_2^2})c_2'}{C_\ell}) \cdot \|w - w'\|
\end{align*}

for absolute constants $c_1', c_2'$ as the second and third moments of $\|x\|$ due to properties of $O(1)$-regular distributions.
\end{proof}

\begin{restatable}[Lemma D.5 in \cite{vardi2021learning}]{lemma}{smoothexpansion}
\label{lem:smooth-expansion}
Let $f: \R^{d+1} \to \R$ and $\ell > 0$. Assume for any $w, w' \in \R^{d+1}$ such that $\forall~ \lambda \in [0,1]$

\[
    \|\nabla f((1-\lambda)w + \lambda w') - \nabla f(w)\| \leq \ell\lambda \|w' - w\|
\]

then the following holds:
\[
    f(w') \leq f(w) + \langle \nabla f(w), w' - w \rangle + \dfrac{\ell}{2}\|w' - w\|^2
\]

\end{restatable}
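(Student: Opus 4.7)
The plan is to prove this standard ``smoothness implies quadratic upper bound'' inequality via the fundamental theorem of calculus applied along the line segment between $w$ and $w'$. Define the single-variable function $\phi:[0,1] \to \R$ by $\phi(\lambda) = f((1-\lambda)w + \lambda w') = f(w + \lambda(w'-w))$, which is differentiable (since we have access to $\nabla f$ along the segment by the hypothesis) with $\phi'(\lambda) = \langle \nabla f(w + \lambda(w'-w)), w' - w \rangle$. Then $f(w') - f(w) = \phi(1) - \phi(0) = \int_0^1 \phi'(\lambda)\, d\lambda$.

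The next step is to subtract off the first-order term: write
\[
f(w') - f(w) - \langle \nabla f(w), w'-w \rangle = \int_0^1 \big\langle \nabla f((1-\lambda)w + \lambda w') - \nabla f(w),\; w' - w \big\rangle \, d\lambda.
\]
I would then apply Cauchy--Schwarz inside the integrand to bound the inner product by $\|\nabla f((1-\lambda)w + \lambda w') - \nabla f(w)\| \cdot \|w' - w\|$, and plug in the hypothesis of the lemma (with the role of $\lambda$ matching the one in the assumption) to get the upper bound $\ell \lambda \|w'-w\|^2$. Integrating $\int_0^1 \ell \lambda \|w'-w\|^2 \, d\lambda = \frac{\ell}{2}\|w'-w\|^2$ then yields the claimed inequality.

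The argument has essentially no obstacle: this is the standard derivation of descent-lemma-style smoothness bounds, and the only thing to be mildly careful about is making sure the hypothesis is applied with the right scaling. In particular, the assumption gives a bound between $\nabla f$ evaluated at the interpolated point and $\nabla f(w)$ with a factor of $\lambda$ on the right-hand side (rather than the more common Lipschitz-of-$\nabla f$ statement with coefficient $1$), and this is exactly what allows the factor of $\tfrac{1}{2}$ to appear after integration in $\lambda$. Since this result is cited verbatim from \cite{vardi2021learning} as a black box and used only to bound the second-order contribution in the gradient descent progress argument, no further sharpening is needed.
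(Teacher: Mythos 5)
Your proof is correct: the paper itself does not prove this lemma (it is imported as a black box, Lemma D.5 of \cite{vardi2021learning}), and your argument is the standard one — write $f(w')-f(w)-\langle\nabla f(w),w'-w\rangle$ as $\int_0^1\langle\nabla f((1-\lambda)w+\lambda w')-\nabla f(w),\,w'-w\rangle\,d\lambda$, apply Cauchy--Schwarz and the hypothesis, and integrate $\ell\lambda$ to get the factor $\tfrac{\ell}{2}$. You also correctly identify the one point requiring care, namely that the hypothesis already carries the factor $\lambda$, so no further Lipschitz-to-integral conversion is needed.
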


These two lemmas are used to get a non-trivial bound on the second order term. The progress in $F(w)$ follows crucially relies on Lemma~\ref{lem:inner-product-lb} that was proven earlier. 



\begin{proof}[Proof of Lemma~\ref{lem:induction}]




Note at each timestep $t$,
\begin{equation}
    w_{t+1} = w_t - \eta \nabla L(w_t)
\end{equation}
\begin{equation}
    w_{t+1} - v = w_t - v - \eta\nabla L(w_t)
\end{equation}
\begin{equation}
    \Longrightarrow~ \|w_t - v\|^2 - \|w_{t+1} - v\|^2 = 2\eta \langle \nabla L(w_t), w_t - v \rangle - \eta^2 \|\nabla L(w_t)\|^2
\end{equation}

therefore to lower-bound $\|w_t - v\|^2 - \|w_{t+1} - v\|^2$, we will give a lower bound for $\langle \nabla L(w_t), w_t - v \rangle$ and an upper bound for $\|\nabla L(w_t)\|^2$.

\paragraph{Lower bounding $\langle \nabla L(w_t), w_t - v \rangle$:} Recall 
that $\nabla L(w_t) = \nabla F(w_t) + \E [(\sigma(v^\top x) - y)\sigma'(w^\top_t x)x]$, implying $\langle \nabla L(w_t), w_t -v \rangle = \langle \nabla F(w_t), w_t - v \rangle + \langle \E [(\sigma(v^\top x) - y)\sigma'(w^\top_t x)x], w_t -v \rangle$.

Since a direct application of Lemma \ref{lem:inner-product-lb} already gives a lower bound on $\langle \nabla F(w_t), w_t - v \rangle$, we need only focus on upper bounding $\big|\langle \E [(\sigma(v^\top x) - y)\sigma'(w^\top_t x)x], w_t -v \rangle \big|$. By Cauchy–Schwarz and Young's inequality, we  get
\begin{align*}
    &\langle \E [(\sigma(v^\top x) - y)\sigma'(w^\top_t x)x], w_t -v \rangle
    = \E [(\sigma(v^\top x) - y)\sigma'(w^\top x)(w^\top_t x - v^\top x)]\\
    &\geq -\sqrt{\E [(\sigma(v^\top x) - y)^2]} \cdot \sqrt{\E [(w^\top_t x - v^\top x)^2\sigma'(w^\top_t x)]}\\
    &\geq -\sqrt{2OPT} \sqrt{\E [((w_t - v)^\top x)^2]}\\
    &\geq -\sqrt{2OPT} \cdot \sqrt{2}\|w_t - v\|
    = -2 \cdot \sqrt{OPT} \cdot \|w_t - v\|
\end{align*}

Putting these bounds together we get
\begin{equation*}
    \nabla L(w_t) = \nabla F(w_t) + \E [(\sigma(v^\top x) - y)\sigma'(w^\top_t x)x] \ge \gamma \norm{w_t - v}_2^2 - 2 \sqrt{OPT} \cdot \norm{w_t-v}_2
\end{equation*}

\paragraph{Upper bounding $\|\nabla L(w_t)\|^2$:} Define
\begin{equation*}
    H(w_t) = \E[\sigma(v^\top x - y)\sigma'(w^\top x)x]
\end{equation*}
and observe that 
\begin{equation*}
    \nabla L(w_t) = \nabla F(w_t) + H(w_t)
\end{equation*}

For the first term,
\begin{equation*}
    \| \nabla F(w_t) \| \leq~ \E[|\sigma(w_t^\top x) - \sigma(v^\top x)| \cdot |\sigma'(w_t^\top x)| \cdot \|x\|] 
    \leq~ \E[|w_t^\top x - v^\top x| \cdot \|x\|]
\end{equation*}

since $\sigma(\cdot)$ is 1-Lipschitz (i.e. $|\sigma(z) - \sigma(z')| \leq |z - z'|$) and $\sigma'(\cdot) \leq 1$. Hence, applying Cauchy-Schwarz yields
\begin{equation*}
    \leq~ \sqrt{\E[|w_t^\top x - v^\top x|^2] \cdot \E[\|x\|^2]}
    \leq~ \|w_t - v\| \cdot \sqrt{d+1}
\end{equation*}

Similarly, for the second term,
\begin{equation*}
    \| H(w_t) \| \leq ~ \E[|\sigma(v^\top x) - y| \cdot \|x\|] 
    \leq~ \sqrt{\E[|\sigma(v^\top x) - y|^2] \cdot \E[\|x\|^2]}
    \leq ~ \sqrt{2 OPT} \cdot \sqrt{d+1}
\end{equation*}

Using the above two expression, we can hence bound $\|\nabla L(w_t)\|^2$ as
\begin{equation}\label{eq:app:grad-l-norm-ub}
    \| \nabla L(w_t) \|^2 \leq~ 2\| F(w_t) \|^2 + 2\| H(w_t) \|^2 \leq~ 4d\|w_t - v\|^2 + 4d OPT
\end{equation}

\paragraph{Lower bounding $\|w_t - v\|^2 - \|w_{t+1} - v\|^2$:} The above inequalities yield
\begin{align*}
    \|w_t - v\|^2 - \|w_{t+1} - v\|^2 &= 2\eta \langle \nabla L(w_t), w_t - v \rangle - \eta^2 \|\nabla L(w_t)\|^2\\
    &\geq 2\eta \cdot \Big [ \gamma \|w_t - v\|^2 - 2\sqrt{OPT}\|w_t-v\| \Big ]
    - 4d \eta^2 \cdot (\|w_t - v\|^2 + OPT)\\
    &\geq 2\eta \cdot \Big [ \gamma \|w_t - v\|^2 - 2\gamma^{1/2} C_p^{-1}\|w_t-v\|^2 \Big ]
    - 4d \eta^2 \cdot (\|w_t - v\|^2 + OPT)\\
    &= 2\eta \Big( \gamma - 2\gamma^{1/2}C_p^{-1} - 2d\eta \Big) \|w_t - v\|^2 - 2\eta \cdot 2d\eta OPT
\end{align*}
due to our assumption that (b) does not hold yet, i.e. $\|w_t - v\| > C_p \gamma^{-1/2} \sqrt{(OPT + \eps)} > C_p \gamma^{-1/2} \sqrt{OPT}$ with some constant $C_p > 0$, implying $\sqrt{OPT} < \gamma^{1/2} C_p^{-1} \|w_t - v\|$. Consequently, by choosing $\eta = c_\eta d^{-1}$, we get 
\begin{equation*}
    \geq~ 2\eta \Big( C_1 \gamma \|w_t - v\|^2 - C_2 OPT \Big)
    \geq~ 2\eta \Big( C_1 C_p^2 (OPT + \eps) - C_2 OPT \Big)
\end{equation*}
\begin{equation*}
    \geq~ \eta C' (OPT + \eps)
\end{equation*}

where $c_\eta, C_1, C_2, C' > 0$ are constants. Hence the proof follows.
\end{proof}

Before we proceed to the proof of main theorem, we prove Lemma~\ref{lem:gen:jointprob}.
\begin{proof}[Proof of Lemma~\ref{lem:gen:jointprob}]
Recall that $F(x)$ is the realizable loss i.e., the loss compared to the optimal solution $v$. Since $F(w) \le F(0) -\delta$, we have
\begin{align}
    F(0) - \delta &\ge F(w) \coloneqq  \frac{1}{2} \E[(\sigma(w^\top x) - \sigma(v^\top x))^2]\nonumber\\
    &= \frac{1}{2} \E[\sigma(w^\top x)^2] - \E[ \sigma(w^\top x) \sigma(v^\top x)] + \frac{1}{2} \E[\sigma(v^\top x)^2] \nonumber\\
    &\ge - \E[\sigma(w^\top x) \sigma(v^\top x)] + F(0) \nonumber \\
    \text{Hence } \delta &\le \E[\sigma(w^\top x) \sigma(v^\top x)]. \label{eq:jointprob:temp1}
\end{align}
Moreover we can also get an upper bound on $\E[\sigma(w^\top x) \sigma(v^\top x)]$ using Cauchy-Schwartz inequality and repeated applications of Young's inequality.   
\begin{align}
    \E[\sigma(w^\top x) \sigma(v^\top x)] &= \E\Big[\mathbf{1}[w^\top x \ge 0, v^\top x \ge 0  ](w^\top x) (v^\top x) \Big] \nonumber \\
    &\le \sqrt{\Pr[w^\top x \ge 0, v^\top x \ge 0]} \cdot \sqrt{\E[(w^\top x)^2 (v^\top x)^2]} \nonumber \\
    &\le \sqrt{\Pr[w^\top x \ge 0, v^\top x \ge 0]} \cdot \sqrt{2\E[(w^\top x)^4]+ 2\E[(v^\top x)^4]} \nonumber \\
    &\le 8\sqrt{\Pr[w^\top x \ge 0, v^\top x \ge 0]} \cdot \sqrt{\E_{\tx \sim N(0,I)}[(\tw^\top \tx)^4]+b_w^4} \cdot \sqrt{ \E_{\tx \sim N(0,I)}[(\tv^\top \tx)^4]+b_v^4}\nonumber \\
    &\le 8 \sqrt{\Pr[w^\top x \ge 0, v^\top x \ge 0]} \cdot \sqrt{(\beta_4\norm{\tw}_2^4+b_w^4) \cdot (\beta_4\norm{\tv}_2^4+b_v^4)} \nonumber \\
    &\le c' \sqrt{\Pr[w^\top x \ge 0, v^\top x \ge 0]} \cdot \norm{w}_2^2 \norm{v}_2^2. \label{eq:jointprob:temp2}
\end{align}
for some constant $c'>0$, where the last but one line follows from the standard bounds on the fourth-moment of an $O(1)$-regular distribution. Combining \eqref{eq:jointprob:temp1} and \eqref{eq:jointprob:temp2} concludes the lemma. 

\end{proof}

\subsection{Proof of Theorem \ref{thm:main-informal}}

With the above lemmas, we are now ready to prove Theorem \ref{thm:main-informal}.
As described in the previous section, we inductively maintain two invariants in every iteration of the algorithm:
\[ \text{(A)}~~~ \norm{w_t -v}_2 \le O(1),~~~~\text{ and } ~~~\text{(B)}~~ F(0) - F(w_t) = \Omega(1). \label{eq:invariants}\]
These two invariants are true at $t=0$ due to our initialization $w_0$. Lemma~\ref{lem:init} guarantees with at least constant probability $\Omega(1)$, both the invariants hold for $w_0$. 
The proof that both the invariants continue to hold follows from the progress made by the algorithm due to a decrease in both $\norm{w_t-v}_2$ and $F(w_t)$ (note that we only need to show they do not increase to maintain the invariant).


The proof consists of three parts. For the first part, at time $t$, assuming $F(w_t) \leq F(0) - \delta$ holds, then by 
directly applying Lemma \ref{lem:induction}, we conclude that as long as $\|w_t - v\|^2 > C_p^2 \gamma^{-1} (OPT + \eps)$ for some constant $C_p > 0$, with learning rate $\eta = c_\eta d^{-1}$ where $c_\eta > 0$ is a constant, gradient descent always makes progress towards $v$. 

In addition, since whenever $\|w_t - v\|^2 > C_p^2 \gamma^{-1} (OPT + \eps)$, $\|w_t - v\|^2 - \|w_{t+1} - v\|^2$ is lower bounded by $\eta C' (OPT + \eps)$ for some constant $C' > 0$, after $T = \|w_0 - v\|^2 C'^{-1} \eta^{-1} (OPT + \eps)^{-1} \leq O(d (OPT + \eps)^{-1})$ iterations we get $\|w_T - v\|^2 \leq O(OPT) + \eps$, and by Lemma \ref{lem:f-lipschitz} this implies $F(w_T) \leq O(OPT) + \eps$, therefore $L(w_T) \leq O(OPT) + \eps$.

In the second part of the proof, we show that if $w_0$ is initialized such that $F(w_0) \leq F(0) - \delta$ for some $\delta > 0$, then while gradient descent is still iterating, the inequality $F(w_t) \leq F(w_0) \leq F(0) - \delta$ always holds.

By Lemma \ref{lem:smooth} which establishes the smoothness of $\nabla F(w)$ between two iterates $w$ and $w'$, we can apply Lemma \ref{lem:smooth-expansion} as
\begin{equation*}
    F(w') \leq F(w) + \langle \nabla F(w), w' - w \rangle + \dfrac{\ell}{2}\|w' - w\|^2
\end{equation*}

where $\ell = (c_1' + \dfrac{8\beta_3(C_u + \sqrt{C_1^2 + C_2^2})c_2'}{C_\ell})$. Note that the conditions in Lemma \ref{lem:smooth} are met since at every timestep $t$, for some constant $C_{\delta} > 0$ $\|w_t\| \geq \frac{\sqrt{\delta}}{\sqrt{C_{\delta}\|v\|}} = C_{\ell} > 0$ implied by Lemma \ref{lem:jointprob}, and $\|w_t\| \leq \sqrt{C_1^2 + C_2^2} = C_u$ as well by assumption, and also recall that $c_1', c_2'$ are $O(1)$-regular distributional constants.

Now, substitute $w$ with $w_t$ and $w'$ with $w_t - \eta \nabla L(w)$ yields
\begin{equation*}
    F(w_t - \eta \nabla L(w_t)) \leq F(w_t) - \eta \langle \nabla F(w_t), \nabla L(w_t) \rangle + \dfrac{\ell\eta^2}{2} \|\nabla L(w_t)\|^2
\end{equation*}

Note that
\begin{equation*}
    \langle \nabla F(w_t), \nabla L(w_t) \rangle
    =~ \langle \nabla F(w_t), \nabla F(w_t) + H(w_t) \rangle
    =~ \|\nabla F(w_t)\|^2 + \langle \nabla F(w_t), H(w_t) \rangle
\end{equation*}

where $H(w_t) = \E[(\sigma(v^\top x) - y)\sigma'(w_t^\top x)x]$. 

Next, we define $u = \tfrac{\nabla F}{\|\nabla F\|}$. Note that $u \in \R^{d+1}$ is a fixed unit vector (it already involves an expectation over $x$); hence
\begin{align*}
    &|\langle \nabla F(w_t), H(w_t) \rangle| 
    =~ \| \nabla F(w_t) \| \cdot  \Big | \Big \langle \E \Big [(\sigma(v^\top x) - y) \sigma'(w^\top x) x \Big ], u \Big \rangle \Big | \\
    &= \| \nabla F(w_t) \| \Big |\E \Big [(\sigma(v^\top x) - y)\sigma'(w^\top x) u^\top x \Big ] \Big |
    \leq~ \| \nabla F(w_t) \| \cdot \Big |\E \big [(\sigma(v^\top x) - y)u^\top x \big ] \Big | \\
    &\leq \| \nabla F(w_t) \| \E \Big [ \big | \sigma(v^\top x) - y \big | \cdot \big | u^\top x \big | \Big ]
    \leq~ \| \nabla F(w_t) \| \cdot \sqrt{OPT} \cdot \sqrt{\E \Big[ (u^\top x)^2 \Big]}
\end{align*}

Note that
\begin{equation*}
    \E \Big[ (u^\top x)^2 \Big] = \E \Big[ (\widetilde{u}^\top \widetilde{x} + b_u)^2 \Big]
    \leq 2\E \Big [(\widetilde{u}^\top \widetilde{x})^2 \Big ] + 2b_u^2
    = 2 \Big( \|\widetilde{u}\|^2 + b_u^2 \Big) = 2
\end{equation*}

Therefore,
\begin{align*}
    &|\langle \nabla F(w_t), H(w_t) \rangle|
    \leq~ \| \nabla F(w_t) \| \cdot \sqrt{OPT} \cdot \sqrt{\E \Big[ (u^\top x)^2 \Big]}
    \leq~ \| \nabla F(w_t) \| \cdot \sqrt{2 OPT} \\
    &\Longrightarrow \langle \nabla F(w_t), H(w_t) \rangle \geq -\| \nabla F(w_t) \| \cdot \sqrt{2 OPT}
\end{align*}

Plugging this back to the expression for $\langle \nabla F(w_t), \nabla L(w_t) \rangle$ yields
\begin{align*}
    &\langle \nabla F(w_t), \nabla L(w_t) \rangle
    =~ \|\nabla F(w_t)\|^2 + \langle \nabla F(w_t), H(w_t) \rangle
    \geq~ \|\nabla F(w_t)\|^2 - \| \nabla F(w_t) \| \cdot \sqrt{2 OPT} \\
    &=~ \|\nabla F(w_t)\| \Big( \|\nabla F(w_t)\| - \sqrt{2 OPT} \Big)
    \geq~ \|\nabla F(w_t)\| \Big( \|\nabla F(w_t)\| - \sqrt{2 (OPT + \eps)} \Big)
\end{align*}

Since we have assumed that gradient descent is still in progress, implying $\|w_t - v\|$ is not at most $\sqrt{OPT + \eps}$ yet, hence by Lemma \ref{lem:grad-opt} $\|\nabla F(w)\| > C_G\sqrt{OPT + \eps}$ at this point, therefore 
\begin{equation*}
    \langle \nabla F(w_t), \nabla L(w_t) \rangle
    \geq~ \|\nabla F(w_t)\| \Big( \|\nabla F(w_t)\| - \sqrt{2 (OPT + \eps)} \Big)
    \geq~ \Omega(OPT + \eps)
\end{equation*}

and by setting $\eta = c_\eta d^{-1}$ with appropriate constants $c_\eta, C'', C_L > 0$,
\begin{align}\label{eq:app:grad-fl-ip}
    &-\eta \langle \nabla F(w_t), \nabla L(w_t) \rangle + \dfrac{\ell\eta^2}{2} \|\nabla L(w_t)\|^2 
    \leq -\eta C''(OPT + \eps) + \dfrac{\ell\eta^2}{2} \|\nabla L(w)\|^2 \\
    &\leq~ \eta \Big( -C''(OPT + \eps) + \dfrac{\ell\eta}{2} \cdot C_L d \|w_t - v\|^2 \Big )
    \leq~ 0
\end{align}

which implies
\begin{equation*}
    F(w_t-\eta\nabla L(w_t)) \leq F(w_t) \leq ... \leq F(w_0) \leq F(0) - \delta
\end{equation*}

Finally, in the last part of the proof, a direct application of Lemma \ref{lem:init} justifies the assumption that $w_0$ can be initialized such that $F(w_0)$ is less than $F(0)$ by a constant amount with constant probability depending only on $b_v$; and since $|b_v| = O(1)$ by assumption, for absolute constants $c_1, c_2 > 0$, with probability at least $c_2$, $F(w_0) \leq F(0) - c_1^2$, which concludes the proof.

\section{Random Initialization} \label{sec:init}

We now prove the initialization lemma assuming weak conditions on the marginal distribution over $\tx \in \R^d$ which is $\tcDx$ (recall that the standard Gaussian $N(0,I)$ also satisfies all of the properties). 
We will initialize $w = (\tw, b_w)$ with $b_w=0$ and $\tw$ drawn from a spherical symmetric distribution $\calD_w$. $\calD_w$ first picks the length $\rho \in \calD_\rho$, and then sets $\tw=\rho \hat{w}$, where $\hat{w}$ is a uniformly random unit vector. The distribution $\calD_\rho$ can be any distribution that is reasonably spread out -- it just needs to place non-negligible probability in any constant length interval $(a_1 \norm{\tv}_2, a_2 \norm{\tv}_2)$ where $a_2 > a_1 >0 $ are constants. 

As stated in the preliminaries, we assume for simplicity that  $\norm{\tv}_2=1$ (or $\Theta(1)$); this is essentially the same as assuming that we know the length scale of $\norm{\tv}_2$, since we can scale the input by this length $\norm{\tv}_2$ (see Proposition~\ref{prop:scaling}). Please refer to Lemma~\ref{lem:gen:unknowninit} when we do not know the length scale of $\norm{\tv}_2$. 
For convenience, we will set $\calD_\rho$ to be the absolute value of a standard Gaussian $N(0,1)$ (or $N(0,\beta^2)$ with $\beta \in [1,2]$.

\begin{lemma}\label{lem:gen:init}\label{lem:init}
There exists 
$c_1(v), c_2(v), c_3(v)>0$ which only depend on $b_v/\norm{\tv}_2$ (and not on the dimension), and are both absolute constants when $|b_v|/\norm{\tv}_2=O(1)$, such that the following holds. 
When $w=(\tw, b_w=0)$ is drawn according to $\tw = \rho \norm{\tv}_2 \hat{w} \sim \calD_w$ described above (with $\hat{w}$ being a uniformly random unit vector, and $\rho \sim \calD_\rho$ being the absolute value of a normal $N(0,\beta^2)$ with $\beta \in [1,2]$). Then with probability at least $c_2(v)>0$, we have 
\begin{align}
    F(w) &\le F(0) -  c_1(v)^2 \norm{\tv}_2^2, \text{ and } \norm{w - v} \le c_3(v) \norm{\tv}_2.  \label{eq:init:statement}
\end{align}

\end{lemma}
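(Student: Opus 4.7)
The proof strategy is to use the decomposition
\[
F(0) - F(w) = \E[\sigma(w^\top x)\sigma(v^\top x)] - \tfrac12\E[\sigma(w^\top x)^2]
\]
and exhibit a constant-probability event over the random initialization on which the cross term (linear in $\rho$) dominates the self term (quadratic in $\rho$) by a constant. Normalizing $\norm{\tv}_2 = 1$, positive homogeneity of ReLU and assumption~(i) give $\tfrac12 \E[\sigma(w^\top x)^2] \le \tfrac{\beta_2}{2}\rho^2$, so it suffices to lower bound
\[
A \coloneqq \E\big[\sigma(\hat{w}^\top \tx)\,\sigma(\tv^\top \tx + b_v)\big] \ge c(|b_v|)
\]
by a positive constant depending only on $|b_v|$, with constant probability over the random direction $\hat{w}$. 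Choosing $\rho$ in a constant-length interval near $c(|b_v|)/\beta_2$ then makes the linear term dominate the quadratic one by $\Omega(c(|b_v|)^2/\beta_2)$, and since $\rho$ is drawn from $|N(0,\beta^2)|$ with $\beta \in [1,2]$, this range is attained with constant probability. The distance bound $\norm{w - v}_2 \le c_3(v)\norm{\tv}_2$ then follows from the triangle inequality on the good event, where $\rho, \norm{\tv}_2, |b_v|$ are all $O(1)$. Undoing the normalization restores the $\norm{\tv}_2^2$ factor in $F(0) - F(w)$.

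To lower bound $A$, I decompose $\hat{v} = \cos\theta\,\hat{w} + \sin\theta\,\hat{v}'$ with $\hat{v}' \perp \hat{w}$, so that conditionally on $\hat{w}^\top \tx = t$,
\[
\tv^\top \tx + b_v = \cos\theta\cdot t + \sin\theta\,{\hat{v}'}^\top \tx + b_v.
\]
For a uniformly random direction $\hat{w}$ in $\R^d$, $|\cos\theta| \le 1/2$ holds with probability at least a universal constant (in fact typically $O(1/\sqrt{d})$), so the first term is a bounded perturbation of the effective bias. I then lower bound
\[
A \ge \int_{t_1}^{t_2} t \cdot f_{\hat{w}^\top \tx}(t)\,\E\big[\sigma(\tv^\top \tx + b_v) \,\big|\, \hat{w}^\top \tx = t\big]\,dt,
\]
integrating over a constant-length interval $[t_1, t_2] \subset (0, \infty)$ on which the density $f_{\hat{w}^\top \tx}$ is bounded below via the anti-concentration assumption~(ii). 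The conditional expectation inside is lower-bounded using the 2-D projection assumption~(iv) applied to the pair $(\hat{v}', \hat{w})$ together with the spread-out bound~(iii), which guarantees $\E[\sigma(\tv^\top \tx + b_v)] \ge \beta_0(|b_v|) > 0$ unconditionally.

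The main technical obstacle is that assumption~(iv) is stated for $\sigma(\tu_1^\top \tx)$ with no bias, while here the relevant conditional expectation has an effective bias $\cos\theta \cdot t + b_v$ inside the ReLU. I plan to address this by splitting on the positive-probability event $\{\sin\theta\,{\hat{v}'}^\top \tx + \cos\theta\cdot t + b_v \ge 0\}$; on this event the ReLU is linear and assumption~(iv) applied to $\hat{v}'$ conditioned on $\hat{w}$ delivers a first-moment estimate, while (iii) combined with (ii) certifies that this event has probability bounded away from zero for most values of $t$. In the pure Gaussian setting this simplification is immediate: the conditional distribution factorizes and the inner expectation becomes $\E_{g \sim N(0,1)}\big[\sigma(\sin\theta\cdot g + \cos\theta\cdot t + b_v)\big]$, which is a positive constant for any bounded value of $\cos\theta\cdot t + b_v$.
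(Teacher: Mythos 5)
Your overall skeleton matches the paper's: both write $F(0)-F(w)$ as a cross term linear in $\rho$ minus a self term quadratic in $\rho$, lower-bound the cross term by a quantity depending only on $b_v/\norm{\tv}_2$, and then pick $\rho$ in a constant-length window (hit with constant probability by $|N(0,\beta^2)|$) so that the linear term dominates; the distance bound and the rescaling by $\norm{\tv}_2$ are handled identically. For Gaussian marginals your argument goes through, since the two-dimensional marginal factorizes and the inner conditional expectation is an explicit positive constant, as you note. The divergence --- and the gaps --- concern how you lower-bound the cross term $A=\E[\sigma(\hw^\top\tx)\,\sigma(\tv^\top\tx+b_v)]$ with constant probability over $\hw$ for general $O(1)$-regular marginals, which is the setting the lemma is invoked in for Theorem~\ref{thm:main-regular-dist}.

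Two steps do not follow from the stated assumptions. First, you integrate over a constant interval $[t_1,t_2]$ ``on which the density $f_{\hw^\top\tx}$ is bounded below via the anti-concentration assumption (ii).'' Assumption (ii) is an \emph{upper} bound on the mass of small intervals; it provides no density lower bound, and the paper explicitly allows regular distributions whose 2-D marginals are discrete, for which no such lower bound exists. Second, assumption (iv) controls $\E[\sigma(\tu_1^\top\tx)\mid \tu_2^\top\tx=t]$ only for the \emph{biasless} ReLU $\sigma(\tu_1^\top\tx)$; in your decomposition the conditioned factor is $\sigma(\tv^\top\tx+b_v)$, which after conditioning on $\hw^\top\tx=t$ carries the effective bias $\cos\theta\cdot t+b_v$, and your proposed fix (splitting on the sign of the ReLU argument and invoking a conditional first-moment estimate) is not delivered by (ii)--(iv). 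The paper orients the decomposition the other way to avoid exactly this: it writes $\hw=z_1\widehat{v}+z_2w^{\perp}$, uses the symmetry and sign-independence of $z_1$ to discard the $z_1$-contribution and reduce the cross term to $\E[\sigma(z_2\xi_2)\,\sigma(\xi_1+b_v)]$, so that assumption (iv) is applied to the biasless factor $\sigma(\xi_2)$ conditioned on $\xi_1$ while the biased factor is handled directly by the spread-out assumption (iii). This, however, only bounds the cross term \emph{in expectation} over $\hw$, so the paper must add a Cantelli--Chebyshev one-sided tail bound (with a fourth-moment variance estimate) to convert it into a constant-probability statement --- a step your pointwise approach would let you skip if its two gaps could be repaired, but which you would otherwise also need.
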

In the above lemma, if $\tcDx$ is a standard Gaussian $N(0,I)$, it suffices to choose
for example $c_1(v)= c_0 \E_{g_1 \sim N(0,1)}[\sigma(g_1 + b_v/\norm{\tv}) ] \Big)= c_0 \cdot \Big( \tfrac{b_v}{\norm{\tv}} \Phi(\tfrac{b_v}{\norm{\tv}})+\tfrac{1}{\sqrt{2\pi}}e^{-b_v^2/2\norm{\tv}^2} \Big)$  
for some universal constants $c_0,c'_0, c''_0>0$.  $c_2(v)$ and $c_3(v)$ are also chosen similarly as constants that only depend on $|b_v|/\norm{\tv}$ and not on any dimension dependent term.
We remark that for random initialization to work, we only need the probability of success $\eta>0$ to be non-negligible (e.g., at least an inverse polynomial). We can always try $O(1/\eta)$ many random initializers, and amplify the success probability to be at least $0.99$.    

\newcommand{\bvh}{\widehat{b}_v}
\newcommand{\vh}{\widehat{v}}
\begin{proof}
For convenience we define $\bvh \coloneqq b_v/\norm{\tv}_2, \vh \coloneqq v/\norm{\tv}_2$, so they are normalized w.r.t. the length of $\tv$. The conditions of the lemma assume that $|\bvh|=O(1)$. 

By definition, the distribution of $\tw \in \R^d$ is spherically symmetric. 
\begin{align*}
    F(w) - F(0) &= \frac{1}{2}\E_{x}\Big[ (\sigma(\tw^\top x) - \sigma(\tv^\top x+b_v))^2 \Big] - \frac{1}{2}\E_{x}\Big[ \sigma(\tv^\top x+b_v))^2 \Big] \\
    &= \frac{1}{2}\E_{x}\Big[ (\sigma(\tw^\top x)^2 \Big] - \E_{x}\Big[ \sigma(\tw^\top x) \sigma(\tv^\top x+b_v)) \Big] \\
    &= \frac{\rho^2 \norm{\tv}_2^2}{2} \E_{x}\Big[ (\sigma(\hw^\top x)^2 \Big] - \rho \norm{\tv}_2^2\E_{x}\Big[ \sigma(\hw^\top x) \sigma(\vh^\top x+\bvh)) \Big],
\end{align*}
where $\tw = \rho \norm{\tv}_2 \hw$ with $\hw$ being the unit vector along $\tw$. For a fixed $\rho \in \R_+$, $\hw$ (and hence $\tw$) is picked along a uniformly random direction i.e., $\hw \sim_U \bS^{d-1}$. Hence for $x \sim \tcDx$
\begin{align}
    \E_{\hw \sim \bS^{d-1}}[F((\rho \hw,0)) - F(0)] &= \frac{\rho^2 \norm{\tv}_2^2}{2} \E_{\hw \sim_U \bS^{d-1}} \E_{x \sim \tcDx}\Big[ (\sigma(\hw^\top x)^2 \Big] \\
    &\qquad -  \rho \norm{\tv}_2^2\E_{\hw \sim_U \bS^{d-1}} \E_{x \sim \tcDx}\Big[ \sigma(\hw^\top x) \sigma(\vh^\top x+\bvh)) \Big] \nonumber \\
   & = \norm{\tv}_2^2 \big(c' \rho^2 - 2 c_3(v) \rho \big) ,\label{eq:init:2}  
\end{align}
where $c'>0$ is a universal constant based on our assumptions about $\tcDx$ ($c'=0.5$ for $x \sim N(0,I)$). 

We now derive an expression for $c_3(v)$, and prove that it is a constant independent of the dimension. 
Let $\hw = z_1 \vh + z_2 w^{\perp}$ where $w^{\perp}$ is some unit vector orthogonal to $\tv$. Note that $z_1, z_2$ are r.v.s that depend only on the choice of the initializer (our rotationally invariant distribution), and not on $\tcDx$. For $\hw\sim_U \bS^{d-1}$, the typical values $\E[z_1^2]=1/d$ and $\E[z_2^2]=1-1/d$; moreover $z_1$ and $z_2$ are symmetric (around $0$), and their signs are independent. 
Let the r.v.s $\xi_1 = \iprod{\tx, \vh}, \xi_2 = \iprod{\tx, w^{\perp}}$ denote the marginal distribution along $\vh, w^{\perp}$. The $\xi_1, \xi_2$ are independent of $z_1, z_2$ (but $\xi_1, \xi_2$ could be dependent);  these also satisfy condition (3) about the 2-dimensional marginals of $\tcDx$ because it is $O(1)$-regular.  

\begin{align*} 
c_3(v) & = \E_{\hw \sim_U \bS^{d-1}} \E_{x \sim \tcDx}\Big[ \sigma(\hw^\top x) \sigma(\vh^\top x+\bvh)) \Big]\\
& = 
\E_{z_1, z_2}  \E_{\xi_1, \xi_2 }  \Big[ \sigma(z_1 \xi_1 + z_2  \xi_2) \sigma( \xi_1+\bvh) \Big]\\
& = 
\E_{z_1, z_2}  \E_{\xi_1, \xi_2 }  \Big[ \sigma(z_1  \xi_1 + z_2  \xi_2) \sigma( \xi_1+\bvh) \Big]. 
\end{align*}
Since $z_1$  is a symmetric r.v.,
\begin{align*}
c_3(v) & = 
\frac{1}{2}\E_{z_1, z_2}  \E_{\xi_1, \xi_2 }  \Big[ \sigma(|z_1  \xi_1| + z_2 \xi_2) \sigma(\xi_1+\bvh) \Big] + \frac{1}{2}  \E_{z_1, z_2}  \E_{\xi_1, \xi_2 }  \Big[ \sigma(- |z_1  \xi_1| + z_2  \xi_2) \sigma(\xi_1+\bvh) \Big] \\
&\ge \frac{1}{2}\E_{z_1, z_2}  \E_{\xi_1, \xi_2 }  \Big[ \sigma(|z_1  \xi_1| + z_2  \xi_2) \sigma(\xi_1 +\bvh) \Big]  \\
&\ge \frac{1}{2}  \E_{\xi_1, \xi_2 } \E_{z_1, z_2} \Big[ \sigma(z_2  \xi_2) \sigma( \xi_1 +\bvh) \Big]  \\
&\ge \frac{\E_{z}|z_2|}{2}\int_{t = -\bvh}^{\infty}  p(\xi_1=t)  \cdot (t+\bvh) \cdot \E_{\xi_2}[\sigma(\xi_2) | \xi_1 =t]\dt ~~(\text{since } z_2 \text{ is independent of } \xi_1, \xi_2 )\\
&\ge \frac{1}{8}\int_{t = - \bvh}^{\infty}  p(\xi_1=t)  \cdot (t+\bvh) \cdot \E_{\xi_2}[\sigma(\xi_2) | \xi_1 =t]\dt ~~(\text{since } z_2 \text{ is independent of } \xi_1, \xi_2 )
\end{align*}
 since $\E[|z_2|] \ge 1/4$ (in fact when $d$ is large, $|z_2| = 1-O(1/\sqrt{d})$ for w.h.p.). 
We now split up the inner integral over $t \in [-\bvh, \infty)$ into two parts depending on whether $\E_{\xi_2}[\sigma(\xi_2) | \xi_1 =t] \ge \beta_5 \E_{\xi_2}[\sigma(\xi_2)]$ is satisfied or not. Let $\text{Bad} \subset [-\bvh,\infty)$ be subset where it is not satisfied. Note that from regularity of $\tcDx$, we have that $\Pr[\text{Bad}] = o(1)$. We only take the contribution from $t \in [-\bvh, \infty) \setminus \text{Bad}$:
\begin{align*}
c_3(v)&\ge \frac{\beta_5}{8}   \Big(\int_{t = -\bvh}^{\infty} p(\xi_1=t)  \cdot (t+\bvh) \cdot  \E_{\xi_2}[\sigma(\xi_2)] \dt - \int_{t = -\bvh}^{\infty} p(\xi_1=t) \mathbf{1}[t \in \text{Bad}]  \cdot (t+\bvh) \cdot \beta_5 \E_{\xi_2}[\sigma(\xi_2)]\dt~ \Big)\\
&\ge  \frac{\beta_5 \E_{\xi_2}[\sigma(\xi_2)]}{8} \Big(\E_{\xi_1}[\sigma(\xi_1+\bvh)] - \sqrt{\Pr[\text{Bad}] \cdot \int_{t \in \text{Bad}} p(\xi_1=t)  \cdot (t+\bvh)^2  \dt } ~~\Big)\\ 
&\ge  \frac{\beta_5 \beta_0(0)}{8} \Big(\E_{\xi_1}[\sigma(\xi_1+\bvh)] - \Pr[\text{Bad}]^{1/2}\cdot \Big(2\int_{t \in \R} p(\xi_1=t)  \cdot (t^2+\bvh^2) \dt \Big)^{1/2}~~~ \Big)\\ 
&\ge  \frac{\beta_5 \beta_0(0)}{8} \Big(\beta_0(|\bvh|) - o(1)\cdot \sqrt{2(\beta_2+\bvh^2)}  \Big)\\ 
&\ge c_1 \beta_0(|\bvh|),
\end{align*}
as required for an absolute constant $c_1>0$. Note that the last line used regularity to say $\beta_5 = \Omega(1)$ and lower bound $\E[\sigma(\vh^\top \tx + \bvh)] \ge \beta_0(|\bvh|)$.

We now prove that the first part \eqref{eq:init:statement} holds with non-negligible probability. 
From \eqref{eq:init:2}, we note that for any $\rho \in \big[ \tfrac{c_3(v)}{2c'}, \tfrac{c_3(v)}{c'} \big]$, we have that 
$$ \E_{\hw \sim_U \bS^{d-1}}[F((\rho \hw,0))] \le F(0) - \norm{\tv}_2^2 \frac{c_3(v)^2}{2c'}. $$
Moreover $\rho$ is distributed as the absolute value of a standard normal with variance in $[1,4]$; so we get that $\rho \in \big( \tfrac{c_3(v)}{2c'}, \tfrac{c_3(v)}{c'} \big)$ with probability at least $c_5(v) \coloneqq \tfrac{1}{2\sqrt{2\pi}c'} \cdot e^{-\Omega( c_3(v)^2) }c_3(v) $, which is constant when $|\bvh|$ is a constant. 

Now we condition on the event that $\rho \in \big[ \tfrac{c_3(v)}{2c'}, \tfrac{c_3(v)}{c'} \big]$. For a fixed $\rho$ in this interval,  let $Z$ be a r.v. that captures the distribution of $F((\rho \norm{\tv} \hw,0)) - F(0)$ as $\hw$ is drawn uniformly from the unit sphere $\bS^{d-1}$. Note that $\E[Z] \le - \norm{\tv}_2^2 c_3(v)^2/2c'$. 
\begin{align*}
    \Var[Z] &\le \E[F((\rho \norm{\tv}_2 \hw,0))^2] \le 16 \E_{x \sim \tcDx} [\norm{\tv}^4 \sigma(\rho \hw^\top x)^4] +  16 \E_{x \sim \tcDx} [\norm{\tv}^4 \sigma(\vh^\top x+\bvh)^4] \\
    &\le  256 \norm{\tv}^4 \Big(2\beta_4 + \bvh^4 \Big).      
\end{align*}
Further for $\lambda =  -\E[Z]/2$, we have from the Cantelli-Chebychev one-sided tail inequality we have for some absolute constant $c_6>0$
\begin{align*}
    \Pr\Big[ Z  \le \E[Z]/2 \Big] & \ge \frac{\E[Z]^2}{4 \Var[Z]+ \E[Z]^2} \ge \min\Big\{ c_6 c_3(v)^2 / (\beta_4+\bvh^4) , \frac{1}{2}  \Big\} \eqqcolon c_6(v), 
\end{align*} 
where $c_6(v)$ is a constant when $\bvh$ is a constant. This allows us to conclude that $F(w) < F(0) - \Omega(\norm{\tv}^2)$ with probability at least $c_5(v) \cdot c_6(v)$ which is a constant when $\bvh$ is a constant.  Finally the $\norm{w-v}_2 \le \norm{w}_2 + \norm{\tv}_2$ is upper bound just because  of our choice of $\rho$ and $\norm{\tv}_2$ being upper bounded by assumption.


\end{proof}

%

\subsection{Random Initialization for Unknown Length Scale} \label{sec:unknownlength}

Lemma~\ref{lem:gen:init} shows that if we guess the correct length scale of $\norm{\tv}_2$ up to a factor of $2$, then the random spherically symmetric initialization in Section~\ref{sec:init} succeeds with constant probability. When we have unknown length scale $\norm{\tv}_2  \in [1/M, M]$, the random initialization can try out the different length scales in geometric progression i.e., the length scale $\tau$ is chosen uniformly at random from $\{2^{-j}: j \in \mathbb{Z}, -\log M \le j \le \log M  \}$.  

\paragraph{Random initialization for unknown length scale}

We will initialize $w = (\tw, b_w)$ with $b_w=0$ and $\tw$ drawn from a spherical symmetric distribution $\calD_w$. The length is chosen from the distribution $\calD_\rho$ so that it has a non-negligible probability in any constant length interval $(a_1 \norm{v}_2, a_2 \norm{v}_2)$ where $a_2 > a_1 >0 $ are constants: our specific choice picks the correct length scale with non-negligible probability, and is reasonably spread out. 

We are given a parameter $M$ such that $\norm{v}_2 \in [2^{-\log M}, 2^{\log M}]$ (note that $M$ can have large dependencies on $d$ and other parameters; our guarantees will be polynomial in $\log M$). A random initializer $w=(\tw,0)$ is drawn from $\calD_{\text{unknown}}(M)$ as follows:

\begin{enumerate}
    \item Pick $j$ uniformly at random from $\big\{-\lceil \log M \rceil, -\lceil \log M \rceil +1, \dots, -1, 0, 1, \dots, \lceil \log M \rceil \big\}$. 
    \item $\rho \in \R_{+}$ is drawn according to $\calD_\rho$ as follows: we first pick\footnote{One can pick many other spread out distributions in place of the absolute value of a Gaussian.} $g \sim N(0,1)$ and set $\rho = 2^j |g|$. 
    \item A uniformly random {\em unit} vector $\hat{w} \in \R^d$ is drawn and we output $\tw = \rho \hat{w}$. The initializer is $(\tw, 0)$.  
\end{enumerate}

We prove the following claim about the random initializer. 

\begin{lemma}\label{lem:gen:unknowninit}
There exists 
$c_1(v), c_2(v), c_3(v)>0$ which only depend on $b_v/\norm{\tv}_2$ (and not on the dimension), and are both absolute constants when $|b_v|/\norm{\tv}_2=O(1)$, such that the following holds. 
When $w=(\tw, b_w=0)$ is drawn according to the distribution $\calD_{\text{unknown}}(M)$ described above for some given $M \ge 1$ satisfying $\norm{v}_2 \in [1/M, M]$. Then with probability at least $c_2(v)/ \log M$, we have 
\begin{align}
    F(w) &\le F(0) -  c_1(v)^2 \norm{\tv}_2^2, \text{ and } \norm{w - v} \le c_3(v) \norm{\tv}_2.  \label{eq:init:statement}
\end{align}

\end{lemma}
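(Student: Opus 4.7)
The plan is to reduce this statement directly to Lemma~\ref{lem:gen:init} by conditioning on the event that the discretized scale $2^j$ approximately matches $\|\tilde{v}\|_2$, paying a factor of $\log M$ in the success probability because of the uniform choice of $j$.

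First I would locate the ``right'' index. Since $\|v\|_2 \in [1/M, M]$ and we are in the regime where $|b_v|/\|\tilde{v}\|_2 = O(1)$, we have $\|\tilde{v}\|_2 \in [c/M, cM]$ for some absolute constant $c>0$. Consequently there exists some integer $j^* \in \{-\lceil \log M \rceil, \dots, \lceil \log M\rceil\}$ for which the ratio $\beta^* := 2^{j^*}/\|\tilde{v}\|_2$ lies in the interval $[\tfrac{1}{2}, 2]$ (or, after at most a one-step shift, in $[1,2]$). Because $j$ is drawn uniformly from a set of cardinality $2\lceil \log M \rceil + 1$, the event $\{j=j^*\}$ occurs with probability $\Omega(1/\log M)$.

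Next I would argue that conditioning on $\{j=j^*\}$ reproduces the hypothesis of Lemma~\ref{lem:gen:init}. Indeed, given $j=j^*$, the length $\rho = 2^{j^*}|g|$ with $g \sim N(0,1)$ can be rewritten as $\rho = \|\tilde{v}\|_2 \cdot \beta^* |g|$, so $\tilde{w} = \rho \,\hat{w} = \|\tilde{v}\|_2 \, \beta^* |g|\, \hat{w}$. This is precisely the initialization format used in Lemma~\ref{lem:gen:init}, with the Gaussian scale $\beta = \beta^* \in [\tfrac{1}{2},2]$ instead of $[1,2]$. Inspecting the proof of Lemma~\ref{lem:gen:init}, the only place $\beta$ enters is through (i) the constant probability that $\rho$ lies in a constant-length window of the form $[c_3(v)/(2c'),\, c_3(v)/c']$ relative to $\|\tilde{v}\|_2$, and (ii) the second-moment computation for the Cantelli--Chebychev bound. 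Both are continuous functions of $\beta$ on a compact interval bounded away from $0$ and $\infty$, so absorbing the factor $\beta^*$ only alters the absolute constants $c_1(v), c_2(v), c_3(v)$ by a universal multiplicative factor.

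Applying Lemma~\ref{lem:gen:init} in this conditioned regime therefore yields, with conditional probability at least $c_2(v)$, both
\[
F(w) \le F(0) - c_1(v)^2 \|\tilde{v}\|_2^2 \quad\text{and}\quad \|w-v\| \le c_3(v)\|\tilde{v}\|_2.
\]
Multiplying by $\Pr[j=j^*] = \Omega(1/\log M)$ gives the claimed unconditional bound $c_2(v)/\log M$, after possibly redefining $c_2(v)$ by an absolute constant factor.

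The only mildly delicate point is verifying that the absolute constants from Lemma~\ref{lem:gen:init} really are insensitive to rescaling $\beta$ by a bounded factor; this is routine but must be checked in the Cantelli--Chebychev step, since $\Var[Z]$ scales like $\beta^4$. Since $\beta^* \in [\tfrac{1}{2},2]$, this only changes $c_6(v)$ by a bounded factor, so the final constants still depend only on $b_v/\|\tilde{v}\|_2$, as required.
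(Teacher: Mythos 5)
Your proposal is correct and matches the paper's own argument: the paper likewise selects the index $j^*$ with $\norm{\tv}_2 \in [2^{j^*}, 2^{j^*+1}]$ (which is chosen with probability $\Omega(1/\log M)$) and then invokes Lemma~\ref{lem:gen:init}, noting that a factor-$2$ guess of the length scale suffices. Your extra verification that the constants in Lemma~\ref{lem:gen:init} are insensitive to rescaling the Gaussian width $\beta$ by a bounded factor is a welcome but inessential elaboration of the same reduction.
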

\begin{proof}
Since $\norm{\tv}_2 \in [1/M, M]$, the random initialization will pick $j^*$ with probability at least $1/(2\log M)$ such that $\norm{\tv}_2 \in [2^{j^*}, 2^{j^*+1}]$. For this choice of $j^*$, we can apply Lemma~\ref{lem:gen:init} (note that we only need a guess of $\norm{\tv}_2$ up to a factor of $2$) to get the required guarantee. 
\end{proof}

\section{Analysis of Gradient Descent with Finite Samples} \label{sec:empirical}
In this section, we analyze gradient descent when trained on finite number of i.i.d. samples $(x_i, y_i) \sim D$. As in the previous sections, we assume the marginal distribution of $x$ is a standard Gaussian. We will utilize the notations below, which are analogous to those defined with respect to the data distribution

\begin{itemize}
    \item $\widehat{L}(w) = \tfrac{1}{2n} \sum_{i=1}^n (\sigma(w^\top x_i) - y_i)^2$
    \item $\widehat{F}(w) = \tfrac{1}{2n} \sum_{i=1}^n (\sigma(w^\top x_i) - \sigma(v^\top x_i))^2$
    \item $\widehat{H}(w) = \tfrac{1}{n} \sum_{i=1}^n (\sigma(v^\top x_i) - y_i) \sigma'(w^\top x_i) x_i$
\end{itemize}

where $n$ is the number of samples.

Since we can only access $n$ samples, we update the weight through full-batch gradient descent as follows
\begin{equation*}
    w_{t+1} = w_t - \eta \nabla \widehat{L}(w_t)
\end{equation*}

We are now ready to analyze gradient descent on finite samples. We first state the main result established in this section

\begin{theorem}\label{thm:app:sample-complexity-main}
Let $C_1\ge 1, C_2>0, c_3'>0$ be absolute constants. Let $D$ be a distribution over $(\tx,y) \in \mathbb{R}^d \times \mathbb{R}$ where the marginal over $\tx$ is the standard Gaussian $\mathcal{N}(0,I)$ and the distribution of $y$ satisfies $|y| \leq B_Y$ for some $B_Y \ge 1$. Let $H=\{w=(\tw,b_w): \|\tw\| \in [1/C_1, C_1],  |b_w| \leq C_2\}$, and consider empirical gradient descent iterates: $w_{t+1} = w_t - \eta \nabla \widehat{L}(w_t)$. For a suitable constant learning rate $\eta$, when starting from $w_0=(\tw_0, 0)$ where $\tw_0$ is randomly initialized from a radially symmetric distribution, when given $\poly(d, 1/\eps, B_Y)$ i.i.d. samples from the data distribution $D$, with at least constant probability $c_3'>0$ one of the iterates $w_T$ of gradient descent after $\text{poly}(d, \frac{1}{\epsilon})$ steps satisfies $L(w_T) = O(OPT) + 2\epsilon$.
\end{theorem}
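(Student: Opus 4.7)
The plan is to reduce the finite-sample analysis to the population-level analysis underlying Theorem~\ref{thm:main-informal} via uniform concentration. Specifically, I would show that on a bounded ball $\mathcal{B} = \{w \in \R^{d+1} : \|w\| \le R\}$ containing all iterates (for some absolute constant $R$), the empirical quantities $\widehat{F}(w), \widehat{L}(w), \nabla \widehat{F}(w), \nabla \widehat{L}(w)$ are uniformly $\epsilon$-close to their population counterparts $F(w), L(w), \nabla F(w), \nabla L(w)$ with high probability, whenever $n = \poly(d, 1/\epsilon, B_Y)$. Granted this, every structural lemma used in the population proof (Lemmas~\ref{lem:jointprob}, \ref{lem:inner-product-lb}, \ref{lem:grad-opt}, \ref{lem:f-lipschitz}, \ref{lem:induction}) goes through with an additional $O(\epsilon)$ slack absorbed into the error terms, and the same two invariants $\|w_t - v\| = O(1)$ and $\widehat{F}(0) - \widehat{F}(w_t) = \Omega(1)$ can be maintained inductively, giving $L(w_T) \le O(OPT) + 2\epsilon$ at the final iterate.

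\textbf{Uniform convergence step.} First I would cover $\mathcal{B}$ by a $\tau$-net $\mathcal{N}_\tau$ of size $(R/\tau)^{O(d)}$. At each net point $w_0 \in \mathcal{N}_\tau$, Hoeffding/Bernstein concentration applied to the bounded and sub-exponential random variables $(\sigma(w_0^\top x) - y)^2$ and $(\sigma(w_0^\top x) - y)\sigma'(w_0^\top x) x$ (bounded because $\|x\|$ is sub-Gaussian and $|y| \le B_Y$) gives $|\widehat{L}(w_0) - L(w_0)| \le \epsilon$ and $\|\nabla \widehat{L}(w_0) - \nabla L(w_0)\| \le \epsilon$ with high probability as soon as $n \ge \poly(d, 1/\epsilon, B_Y)$. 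The nontrivial step is transferring concentration from the net to all of $\mathcal{B}$: $\widehat{L}$ is Lipschitz in $w$, but the gradient involves the discontinuous $\sigma'(w^\top x)$, so a naive Lipschitz argument fails. I would handle this by exploiting Gaussian anti-concentration — for fixed $w, w'$ at distance $\tau$, the event $\{\mathrm{sgn}(w^\top x) \neq \mathrm{sgn}(w'^\top x)\}$ has probability $O(\tau)$ — combined with sub-Gaussian tail bounds on $\|x\|$ to show $\E[\|\nabla L(w) - \nabla L(w')\|]$ and the empirical analogue (after a high-probability event on the samples) are both $\mathrm{poly}(\tau)$. Picking $\tau = \epsilon^{O(1)}$ and taking a union bound over $\mathcal{N}_\tau$ gives the required uniform bound.

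\textbf{Carrying through the population analysis.} Given the uniform closeness on $\mathcal{B}$, the initialization Lemma~\ref{lem:init} still yields $\widehat{F}(w_0) \le \widehat{F}(0) - c_1^2/2$ with at least constant probability (the population inequality has constant slack, which easily absorbs the $O(\epsilon)$ error). For each iterate $w_t \in \mathcal{B}$, Lemma~\ref{lem:inner-product-lb} applied to $\widehat{F}$ yields $\langle \nabla \widehat{F}(w_t), w_t - v\rangle \ge \gamma \|w_t - v\|^2 - O(\epsilon)$; combined with the same upper bound $\|\nabla \widehat{L}(w_t)\|^2 \le O(d(\|w_t-v\|^2 + OPT + \epsilon))$ and the agnostic cross-term bound $\ge -2\sqrt{OPT + \epsilon}\|w_t - v\|$ from Lemma~\ref{lem:induction}, we obtain $\|w_{t+1}-v\|^2 \le \|w_t - v\|^2 - \eta C'(OPT + \epsilon)$ as long as $\|w_t - v\|^2 > \Omega(OPT + \epsilon)$. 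The descent $\widehat{F}(w_{t+1}) \le \widehat{F}(w_t)$ follows analogously to \eqref{eq:app:grad-fl-ip}, where the smoothness Lemmas~\ref{lem:smooth}, \ref{lem:smooth-expansion} are applied to $\widehat{F}$ (which is smooth on $\mathcal{B}$ with the same parameters in expectation, then transferred empirically). After $T = O(d/\epsilon)$ iterations either $\|w_T - v\| \le O(\sqrt{OPT + \epsilon})$ or $\|\nabla \widehat{F}(w_T)\| \le O(\sqrt{OPT+\epsilon})$; Lemma~\ref{lem:f-lipschitz} and Lemma~\ref{lem:grad-opt} then give $\widehat{F}(w_T) \le O(OPT) + \epsilon$. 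Finally $L(w_T) \le 2F(w_T) + 2OPT \le 2\widehat{F}(w_T) + O(\epsilon) + 2OPT \le O(OPT) + 2\epsilon$.

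\textbf{Main obstacle.} The crux of the finite-sample argument is the uniform-in-$w$ concentration of $\nabla \widehat{L}(w)$, because $\sigma'(w^\top x)\,x$ is discontinuous in $w$ and $\|x\|$ is unbounded, so neither a pointwise Hoeffding bound nor a Lipschitz transfer alone suffices. Resolving this requires carefully combining Gaussian anti-concentration (to control the ``toggling'' set where $\sigma'$ differs between two nearby parameters) with a sub-Gaussian tail bound on $\|x\|$ and an $\epsilon$-net argument with sufficiently fine granularity. A secondary subtlety is that the cross-term $\widehat{H}(w) - H(w)$ depends on $|y|$, which is precisely why the boundedness assumption $|y| \le B_Y$ enters and the sample complexity scales polynomially with $B_Y$.
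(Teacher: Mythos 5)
Your proposal is correct and follows the same top-level skeleton as the paper: treat the empirical update as the population update plus a perturbation $\zeta_t = \nabla L(w_t) - \nabla\widehat L(w_t)$, establish a uniform (over the bounded set of iterates) bound $\|\zeta_t\|\le \eps/c$ with $n=\poly(d,1/\eps,B_Y)$ samples, and then rerun the population argument with all error terms relaxed from $OPT$ to $OPT+2\eps$. The place where you genuinely diverge is the proof of the uniform convergence of the gradient, which you correctly identify as the crux. You propose a pointwise-concentration-plus-$\eps$-net argument, patching the discontinuity of $\sigma'(w^\top x)$ by Gaussian anti-concentration of the ``toggling'' set $\{\mathrm{sgn}(w^\top x)\neq\mathrm{sgn}(w'^\top x)\}$. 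The paper (Lemma~\ref{lem:app:rademacher}) instead bounds the Rademacher complexity of each coordinate of the gradient class, splitting $(\sigma(w^\top x)-y)\sigma'(w^\top x)x_j$ into a $w$-part and a $y$-part: the $w$-part $\sigma'(w^\top x_i)(w^\top x_i)=\sigma(w^\top x_i)$ is exactly $1$-Lipschitz in $w^\top x_i$, so it needs no anti-concentration at all and is handled by a cover plus Massart's lemma; the genuinely discontinuous $y$-part depends on $w$ only through the sign pattern $(\mathbf{1}[w^\top x_i\ge 0])_{i\le n}$, of which there are only $O(n^{d+1})$ by Sauer--Shelah, so Massart's finite-class lemma applies directly. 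The paper's route is distribution-light (it only needs moment control of $\|x_i\|$, not anti-concentration) and sidesteps the one step in your plan that is delicate: transferring the toggling-set bound from expectation to the empirical sample requires a further uniform-convergence statement over thin slabs $\{x: |w^\top x|\le \tau\}$ (itself a VC argument), which you acknowledge only implicitly via ``after a high-probability event on the samples.'' That step is resolvable, so this is not a gap, but you should be aware it is an extra layer your approach incurs and the paper's does not. One last cosmetic difference: the paper keeps the population $F(w_t)$ as the Lyapunov function throughout (only the update direction is empirical), whereas you track $\widehat F$; your version additionally requires uniform convergence of $\widehat F$ and $\nabla\widehat F$, which your net argument does supply.
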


We remark that the above theorem also holds under our weaker distributional assumptions in Section~\ref{app:generalizing} with an additional sub-Gaussianity assumption on $\tcDx$, as evident from the proof that follows. In order to prove Theorem \ref{thm:app:sample-complexity-main}, we first introduce the following definitions and lemmas. The following definition is a standard tool for establishing uniform convergence guarantees and is deeply related to the notion of Rademacher Complexity. For further details please refer to \cite{shwartz2014ml}.
\begin{definition}[Representativeness]
Given data samples $S = \{z_1, ..., z_n\} \in \mathcal{Z}^n$ and a function class $\mathcal{F} = \{f: \mathcal{Z} \to \mathbb{R}\}$, the representativeness of $S$ with respect to $\mathcal{F}$ is
\[
\mathrm{Rep}(\mathcal{F}, S) = \sup_{f \in \mathcal{F}} ~ \E[f(z)] - \dfrac{1}{n}\sum_{i=1}^n f(z_i)
\]
\end{definition}

Note that representativeness is a random variable. The following lemma quantifies the convergence property of representativeness with respect to the loss function gradient through analyzing its Rademacher complexity.
\begin{lemma}[Concentration of Representativeness]\label{lem:app:rademacher}
For absolute constants $c_1, c_2, c_3 > 0$, with probability at least $1 - \kappa$, the representativeness of random samples $S = \{(\widetilde{x}_i, y_i)\}_{i=1}^n \sim_{i.i.d.} D$ with respect to the function class $\mathcal{F}_j = \{(\sigma(w^\top x) - y)\sigma'(w^\top x)x_j ~:~ \|w\| \leq C_1\}$, $\mathrm{Rep}(\mathcal{F}_j, S)$ is bounded by
\[
\mathrm{Rep}(\mathcal{F}_j, S) \leq  \frac{c_1d B_Y C \sqrt{d\log (C n)}}{\sqrt{n}}   + \sqrt{\dfrac{c_3 d B_Y^2 \log(4/\kappa)}{n}}
\]
where for all $y_i$, $|y_i| \leq B_Y$.
\end{lemma}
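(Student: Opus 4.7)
\textbf{Proof plan for Lemma~\ref{lem:app:rademacher}.} The approach is the standard symmetrization + covering-number argument combined with McDiarmid's bounded differences inequality, with one complication: the function class is discontinuous in $w$ because of the indicator $\sigma'(w^\top x) = \mathbf{1}[w^\top x \ge 0]$. First I would condition on the high-probability event $\mathcal{E}$ that $\max_{i \in [n]} \|x_i\|_2 \le B_X := O(\sqrt{d \log(n/\kappa)})$ and $\max_i |(x_i)_j| \le O(\sqrt{\log(n/\kappa)})$; by sub-Gaussian concentration (since the marginal of $\tx$ is $\mathcal{N}(0,I)$) this holds with probability at least $1 - \kappa/4$. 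On this event, every $f_w(x,y) = (\sigma(w^\top x) - y)\sigma'(w^\top x) x_j$ in $\mathcal{F}_j$ evaluated on the samples is uniformly bounded by $M := (C_1 B_X + B_Y) \cdot O(\sqrt{\log(n/\kappa)})$.

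Next, I would discretize the parameter ball $\{w : \|w\|_2 \le C_1\}$ by an $\epsilon$-net $N_\epsilon$ of cardinality at most $(3 C_1/\epsilon)^{d+1}$. For two weights $w, w'$ with $\|w - w'\|_2 \le \epsilon$ and any data point with $|w^\top x_i| > \epsilon B_X$, the indicators $\sigma'(w^\top x_i)$ and $\sigma'(w'^\top x_i)$ agree and hence $|f_w(x_i,y_i) - f_{w'}(x_i,y_i)| \le \epsilon B_X^2$. For the remaining ``boundary'' data points, $w^\top x_i$ lies in $[-\epsilon B_X, \epsilon B_X]$; by Gaussian anti-concentration only an $O(\epsilon B_X / \|\tw\|)$ fraction of samples fall in this slab in expectation, which concentrates uniformly over $w \in N_\epsilon$ by a Chernoff + VC-dimension argument for halfspaces (VC dimension $d+1$). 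Choosing $\epsilon$ inverse-polynomially small (say $\epsilon = 1/(n B_X^2)$) makes the total discretization contribution $o(1/\sqrt{n})$.

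Then, for each fixed $w \in N_\epsilon$, Hoeffding's inequality applied to the bounded r.v.s $f_w(x_i,y_i) \in [-M, M]$ yields a deviation of $O(M\sqrt{\log(|N_\epsilon|/\kappa)/n})$ with probability at least $1 - \kappa/(4|N_\epsilon|)$. Union bounding over $N_\epsilon$ multiplies $\log |N_\epsilon| = O(d \log(C_1 n))$ inside the square root, and combining with the bound on $M$ (which scales like $\sqrt{d} \cdot B_Y$ up to logs) gives the first term of the stated bound, namely $\tfrac{c_1 d B_Y C \sqrt{d \log(C n)}}{\sqrt{n}}$. Finally, the additional $\sqrt{c_3 d B_Y^2 \log(4/\kappa)/n}$ additive term is obtained by applying McDiarmid's inequality to the scalar random variable $\mathrm{Rep}(\mathcal{F}_j, S)$, whose bounded-difference coefficient is $2M/n$, to concentrate it around its expectation with probability $1 - \kappa/4$.

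The main obstacle is the discontinuous dependence of $f_w$ on $w$: standard Rademacher contraction requires Lipschitzness, which fails here. My resolution is the two-part decomposition above, splitting the deviation at nearby parameters into a Lipschitz ``bulk'' contribution (controlled by $\epsilon B_X^2$) and a ``boundary'' contribution (controlled by uniform anti-concentration over halfspaces); the VC-dimension bound for halfspaces keeps the boundary contribution manageable for all $w$ in the net simultaneously, rather than merely in expectation.
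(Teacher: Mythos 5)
Your proposal is essentially correct but takes a genuinely different route from the paper's proof. The paper's key observation is algebraic: since $\sigma'(t)\,t=\sigma(t)$, the class member splits as $(\sigma(w^\top x)-y)\sigma'(w^\top x)x_j=\sigma(w^\top x)x_j-y\,\sigma'(w^\top x)x_j$, so the first piece is $1$-Lipschitz in $w$ (the paper uses $|\sigma'(t_1)t_1-\sigma'(t_2)t_2|\le|t_1-t_2|$ and a standard $\epsilon$-cover plus Massart's finite class lemma), while the second piece ranges over only $O(n^{d+1})$ sign patterns by Sauer--Shelah and is handled by Massart directly; the discontinuity of $\sigma'$ therefore never has to be confronted analytically. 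You instead attack the discontinuity head-on with a bulk/boundary decomposition: indicators of nearby $w$'s agree off a thin slab, and the slab's mass is controlled by Gaussian anti-concentration plus a union bound over the net. Both arguments reach the stated rate, and yours is more robust in the sense that it does not rely on the special identity $\sigma\cdot\sigma'=\sigma$ (it would extend to other products of an activation with its derivative), but it buys this generality at the cost of needing anti-concentration of the marginal, which the paper's argument does not use at all for this lemma.

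Two caveats you should address to make your sketch airtight. First, the anti-concentration bound $O(\epsilon B_X/\|\tw\|)$ is vacuous when $\|\tw\|\lesssim \epsilon B_X$, and the class $\{\|w\|\le C_1\}$ contains such $w$ (including $\tw=0$, where $w^\top x=b_w$ is deterministic); you need a short separate argument for that regime, e.g., observing that for such $w$ the indicator is constant on all but the samples with $|b_w|\lesssim\epsilon B_X(1+\|\tx_i\|)$, or simply handling the low-$\|\tw\|$ shell by a coarser bound. Second, conditioning on the event $\mathcal{E}$ shifts the population expectations inside $\mathrm{Rep}$, and McDiarmid with coefficient $2M/n$ only applies on $\mathcal{E}$; the paper sidesteps this by keeping $R_S$ (the max sample norm) as a random variable, taking $\E_S$ only at the end, and invoking the almost-bounded-differences version of McDiarmid. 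You need either that version or an explicit truncation step to pass from conditional to unconditional statements. Both are routine repairs, not conceptual gaps.
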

\begin{proof}
    Note that $\E [\mathrm{Rep}(\mathcal{F}_j, S)] \leq 2\E [R(\mathcal{F}_j \circ S)]$, where $R(\mathcal{F}_j \circ S)$ is the Rademacher Complexity of the set $\{\{(\sigma(w^\top x_i) - y_i)\sigma'(w^\top x_i)x_{ij}\}_{i=1}^n : \|w\| \leq C_1\}$ (Lemma 26.2 of \cite{shwartz2014ml}). Hence, combining it with McDiarmid’s inequality for almost-bounded difference functions (see \cite{Kutin02McDiarmid}), with probability at least $1-\kappa$ we get
    \begin{align*}
        \mathrm{Rep}(\mathcal{F}_j, S) 
        &\leq \E[\mathrm{Rep}(\mathcal{F}_j, S)] + \sqrt{\dfrac{c_3 d B_Y^2}{n} \log\big(\dfrac{4}{\kappa}\big)}
        \leq 2\E [R(\mathcal{F}_j \circ S)] + \sqrt{\dfrac{c_3 d B_Y^2}{n} \log\big(\dfrac{4}{\kappa}\big)}
    \end{align*}
    
    For the first term, by definition we have
    \begin{align*}
        R(\mathcal{F}_j \circ S) = \frac{1}{n}\E_s \Big [ \sup_{\|w\| \leq C_1} \sum_{i=1}^n s_i x_{ij} \sigma'(w^\top x_i) (w^\top x_i - y_i) \Big ]
    \end{align*}
    
    where $\{s_i\}_{i=1}^n$ are i.i.d. Rademacher random variables. Given the sample $S$, let $R_S$ be the maximum $\ell_2$ norm of a vector $x_i \in S$. We can further upper bound the above as
    
        \begin{align*}
        R(\mathcal{F}_j \circ S) \leq \frac{1}{n}\E_s \Big [ \sup_{\|w\| \leq C_1} \sum_{i=1}^n s_i x_{ij} \sigma'(w^\top x_i) (w^\top x_i) \Big ] + \frac{1}{n}\E_s \Big [ \sup_{\|w\| \leq C_1} \sum_{i=1}^n s_i x_{ij} \sigma'(w^\top x_i) y_i \Big ]
    \end{align*}
    
    For the second term above we can use Massart's finite class lemma~\cite{shwartz2014ml} and noticing that the sup is only over $O(n^{d+1})$ different hypotheses~(since only sign of $w^\top x_i$ matters, and we can use Sauer-Shelah's lemma with linear classifiers in $d$ dimensions~\cite{shwartz2014ml}), we get that 
    \begin{align*}
        \frac{1}{n}\E_s \Big [ \sup_{\|w\| \leq C_1} \sum_{i=1}^n s_i x_{ij} \sigma'(w^\top x_i) y_i \Big ] \leq O(\frac{1}{\sqrt{n}} R_S B_Y \sqrt{d \log n}).
    \end{align*}
    
    To bound the first term, for an appropriate $\epsilon$ to be chosen later, let $H_\epsilon$ be a minimal $\epsilon$-cover for the set $\{w \in \R^d: \|w\| \leq C\}$. It is well known that $|H_\epsilon| = O(C/\epsilon)^d$~\cite{shwartz2014ml}. For any $w \in \R^d$ such that $\|w\| \leq C$ we will denote by $w_\epsilon$ the closest vector to $w$ (in $\ell_2$ distance) in the set $H_\epsilon$. Then we can write
    \begin{align*}
        R(\mathcal{F}_j \circ S) &\leq \frac{1}{n}\E_s \Big [ \sup_{w \in H_\epsilon} \sum_{i=1}^n s_i x_{ij} \sigma'(w^\top x_i) (w^\top x_i) \Big ]\\
        &+      \frac{1}{n}\E_s \Big [ \sup_{\|w\| \leq C_1} \big(\sum_{i=1}^n s_i x_{ij} \sigma'(w^\top x_i) (w^\top x_i) - \sum_{i=1}^n s_i x_{ij} \sigma'(w_\epsilon^\top x_i) (w_\epsilon^\top x_i) \big) \Big ]
    \end{align*}
    
    Noticing that $|(w^\top-w_\epsilon^\top) \cdot x_i| \leq \epsilon R_S$, and the fact that $|\sigma'(t_1)t_1 - \sigma'(t_2)t_2| \leq |t_1 - t_2|$, we get that
    \begin{align}
    \label{eq:rademacher-1}
                R(\mathcal{F}_j \circ S) &\leq \frac{1}{n}\E_s \Big [ \sup_{w \in H_\epsilon} \sum_{i=1}^n s_i x_{ij} \sigma'(w^\top x_i) (w^\top x_i) \Big ] + O(\epsilon R_S B_Y) + O(\frac{1}{\sqrt{n}} R_S B_Y \sqrt{d \log n}). 
    \end{align}
    
    For the first term above we apply Massart's finite class lemma~\cite{shwartz2014ml} to get that
    \begin{align}
    \label{eq:rademacher-2}
        \frac{1}{n}\E_s \Big [ \sup_{w \in H_\epsilon} \sum_{i=1}^n s_i x_{ij} \sigma'(w^\top x_i) (w^\top x_i) \Big ] &\leq O(\frac{1}{\sqrt{n}} R_S C \sqrt{\log (|H_\epsilon|)}).
    \end{align}

From \eqref{eq:rademacher-1} and \eqref{eq:rademacher-2} we get that
    \begin{align*}
        R(\mathcal{F}_j \circ S) = O(\frac{1}{\sqrt{n}} R_S B_Y C \sqrt{d\log (C n /\epsilon)} + \epsilon R_S B_Y).  
    \end{align*}

Substituting $\epsilon = 1/\sqrt{n}$ above we get that
    \begin{align*}
        R(\mathcal{F}_j \circ S) = O \big(\frac{1}{\sqrt{n}} R_S B_Y C \sqrt{d\log (C n)} \big).  
    \end{align*}
Finally, taking the expectation over $S$ and using standard property of Gaussians we get that
\begin{align*}
            \E[R(\mathcal{F}_j \circ S)] = O \big(\frac{1}{\sqrt{n}} d B_Y C \sqrt{d\log (C n)} \big). 
\end{align*}
    
\end{proof}

With these lemmas, we are now ready to prove Theorem \ref{thm:app:sample-complexity-main}.

\paragraph{Proof of Theorem \ref{thm:app:sample-complexity-main}}
The proof consists of three parts highly identical to that of Theorem \ref{thm:main-informal}, hence we only highlight the main difference. As in the proof of Theorem~\ref{thm:main-informal} we will assume that $\norm{v}_2=1$; note that this is without loss of generality from Proposition~\ref{prop:scaling}.  Also as in the proof of Theorem~\ref{thm:main-informal}, we will only argue at one of the iterates $T$ satisfies the required guarantee (this may not be the last iterate). 

In the first part, we rewrite the update rule as
\begin{align}
    w_{t+1} &= w_t - \eta \nabla L(w_t) + \eta \Big ( \nabla L(w_t) - \nabla \widehat{L}(w_t) \Big) = w_t - \eta \nabla L(w_t) + \eta \zeta_t,\nonumber\\
\text{where } \zeta_t &\coloneqq \nabla L(w_t) - \nabla \widehat{L}(w_t), ~\text{ and
}~
g(\zeta_t) \coloneqq -\eta \langle \nabla L(w_t), \zeta_t \rangle + \langle w_t - v, \zeta_t \rangle + \dfrac{\eta \|\zeta_t\|^2}{2}.
\end{align}
We obtain the improvement in each iteration as
\begin{align*}
    \|w_t - v\|^2 - \|w_{t+1} - v\|^2 = 2\eta \langle \nabla L(w_t), w_t - v \rangle -\eta^2 \|\nabla L(w_t)\|^2 - 2\eta g(\zeta_t)
\end{align*}
Note that $2\eta g(\zeta_t)$ is a random variable that depends on $z$ and can possibly be negative. We will later use a uniform convergence bound in  Lemma~\ref{lem:app:rademacher} to bound both $\norm{\zeta_t}$ and hence $|g(\zeta_t)|$ with high probability for all $w$ that is bounded by a fixed constant. Conditioned on this high probability event (given in Lemma~\ref{lem:app:rademacher}), the rest of the analysis is deterministic. Recall that $\eps>0$ is the parameter denoting the desired error. We will maintain the invariants that when gradient descent is still in progress (or we haven't encountered a time step with our desired guarantees), $|g(\zeta_t)| \le \eps$, and $\norm{w_t - v}$ is bounded by a constant.

Recall that
\begin{align*}
    g(\zeta_t) = -\eta \langle \nabla L(w_t), \zeta_t \rangle + \langle w_t - v, \zeta_t \rangle + \dfrac{\eta \|\zeta_t\|^2}{2}
\end{align*}

By applying the upper bound for $\|\nabla L(w_t)\|$ as in the population argument (see Equation \ref{eq:app:grad-l-norm-ub} in the proof of Theorem~\ref{thm:main-informal}), we get for some constant $C'>0$
\begin{align*}
    |g(\zeta_t)| 
    &\leq \eta \|\nabla L(w_t)\|\|\zeta_t\| + \|w_t - v\|\|\zeta_t\| + \dfrac{\eta \|\zeta_t\|^2}{2} \\
    &\leq \eta \sqrt{C' d (\|w_t - v\|^2 + OPT)} \|\zeta_t\| + \|w_t - v\| \|\zeta_t\| + \dfrac{\eta \|\zeta_t\|^2}{2}
\end{align*}

In addition, since at this point gradient descent is still running, $C' \sqrt{OPT} \leq \|w_t - v\|$, hence with suitable constant $C'' > 0$ we can further write
\begin{align*}
    |g(\zeta_t)| 
    &\leq \eta \sqrt{C_G} d C'' \|w_t - v\| \|\zeta_t\| + \dfrac{\eta \|\zeta_t\|^2}{2}
\end{align*}

Again, while gradient descent is still in progress, our induction argument establishes that $\|w_t - v\|^2 - \|w_{t+1} - v\|^2$ is lower-bounded by a non-negative amount, hence $\|w_{t+1} - v\| \leq \|w_t - v\| \leq ... \leq \|w_0 - v\| = O(1)$ which also establishes that every $\|w_t\|$ is upper-bound by a constant. Let $T$ be the time step until which all of the above properties hold (otherwise we have already encountered an iterate where we get the required guarantee).  Therefore we can conclude that $|g(\zeta_t)| \leq O(\|\zeta_t\|) + \tfrac{\eta \|\zeta_t\|^2}{2},~ \forall t \le T$.

We now proceed to bound the magnitude of $\|\zeta_t\|$. Using Lemma \ref{lem:app:rademacher}, for each coordinate of $\zeta_t$ we sample $\poly(d, 1/\eps, B_Y)$ data points so that with probability $1 - \kappa/(d+1)$ its magnitude is at most $\eps/(d+1)$. We then take the union bound over all $d+1$ coordinates and set $\kappa = 1/d^3$ to conclude that with high probability,  
\begin{equation} \label{eq:empirical:help1}
    \forall t \le T, ~~ \|g(\zeta_t)\| \leq \eps, \text{ and } \norm{\zeta_t}_2 \le \eps/c .
\end{equation}

Now, since we have showed that $\|g(\zeta_t)\|$ remains bounded by $\eps$, identical to our argument in the proof for Theorem \ref{thm:main-informal} except modifying the induction hypothesis to be $\|w_t - v\| > C_p \gamma^{-1/2} \sqrt{OPT + 2\eps}$, we conclude that after $T \leq O(d(OPT + 2\eps)^{-1})$ iterations we get $\|w_t - v\|^2 \leq O(OPT) + 2\eps$, and similarly by Lemma \ref{lem:f-lipschitz} this implies both $F(w_T)$ and $L(w_T)$ are at most $O(OPT) + 2\eps$.

Proceeding to the second part of the proof, we will show that while gradient descent is still running, $F(w_t)$ continues to decrease. We rewrite the expression given in Lemma \ref{lem:smooth-expansion} as
\begin{align*}
    F(w_t - \eta \nabla \widehat{L}(w_t)) \leq F(w_t) - \eta \langle \nabla F(w_t), \nabla L(w_t) \rangle + \ell \eta^2 \|\nabla L(w_t)\|^2 - \eta \langle \nabla F(w_t), \zeta_t \rangle + \ell \eta^2 \|\zeta_t\|^2
\end{align*}

At this point, note that we can still argue that $\|\nabla F(w_t)\| > C_G\sqrt{OPT + 2\eps}$ directly by Lemma \ref{lem:grad-opt}, for some constant $C'''>0$,  we can hence upper-bound the second and third terms by directly applying Equation \ref{eq:app:grad-fl-ip}, yielding
\begin{align*}
    \eta \Big( -C'''(OPT + 2\eps) + \ell \eta C_L d \|w_t - v\|^2 \Big)- \eta \langle \nabla F(w_t), \zeta_t \rangle + \ell \eta^2 \|\zeta_t\|^2 
\end{align*}
\begin{align*}
    \leq \eta \Big( -C'''(OPT + 2\eps) + \ell \eta C_L d \|w_t - v\|^2 + C_F d \|w_t - v\|\|\zeta_t\| + \ell \eta \|\zeta_t\|^2 \Big).
\end{align*}
Therefore by applying the same analysis as in the population case, and using \eqref{eq:empirical:help1} we have that the above upper bound is 
\begin{align*}
    \leq \eta \Big( -C'''(OPT + 2\eps) + \ell \eta C_L d \|w_t - v\|^2 + \eps/c \Big)
    \leq 0
\end{align*}

Hence $F(w_t)$ continues to decrease, hence $F(w_t) \leq F(0) - \delta$.

Finally, by Lemma \ref{lem:init} with constant probability gradient descent starts at a point such that $F(w_0) \leq F(0) - \delta$, hence the proof follows.

\section{Conclusion}
In this paper, we provided a convergence analysis of gradient descent for learning a single neuron with general ReLU activations (with non-zero bias terms) and gave improved guarantees under comparable assumptions also made in previous works.
The results of this work are theoretical in nature, as an attempt to understand the convergence guarantees for learning general ReLU neurons through gradient descent; hence we believe they do not have any adverse societal impact. 
We addressed multiple challenges for analyzing general ReLU activations with non-zero bias terms throughout our analyses that may lead to better understanding of the dynamics of gradient descent when learning ReLU neurons. 
However, our analysis does not apply to modern neural networks that have multiple nodes and layers.
The major open direction is to generalize current performance guarantees for networks of multiple neurons and higher depth.  

\bibliographystyle{alpha}
\bibliography{library.bib}


\newpage

\appendix



\section{Proofs for generalizing beyond Gaussian marginals} \label{app:generalizing}

We now describe the generalization to regular distributions of the necessary lemmas for analyzing gradient descent in Section~\ref{sec:regular:lemmas}. 

\subsection{Generalized Lemmas for Regular distributions}\label{sec:regular:lemmas}

In the following lemma, similar to Lemma \ref{lem:induction}, we argue that throughout gradient descent, $\|w_t - v\|^2$ continues to decrease as long as $\|w_t - v\|$ is not too small.
\begin{lemma}[Decrease in $\norm{w_t - v}$]
\label{lem:induction-regular}
Let $\widetilde{\mathcal{D}}_x$ be $O(1)$-regular with parameters defined above. Assume at time $t$, $F(w_t) \leq F(0) - \delta$ where $\delta > 0$ is a constant. For constants $C_p, C' > 0$ and $\gamma$ defined as in Lemma \ref{lem:inner-product-lb}, if for some $\eps > 0$ $\|w_t - v\|^2 > \gamma^{-1} C_p^2 (OPT + \eps)$, then $\|w_{t+1} - v\|^2 \leq \|w_t - v\|^2 - \eta C' (OPT + \eps)$.
\end{lemma}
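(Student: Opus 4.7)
The plan is to follow the template of the proof of Lemma~\ref{lem:induction} essentially verbatim, replacing each invocation of a Gaussian-specific moment bound with the corresponding constant coming from the $O(1)$-regularity assumptions on $\tcDx$. The starting identity is still
\[
\|w_t-v\|^2-\|w_{t+1}-v\|^2 \;=\; 2\eta\,\iprod{\nabla L(w_t),\,w_t-v} \;-\; \eta^2\|\nabla L(w_t)\|^2,
\]
so the work reduces to a lower bound on the inner product and an upper bound on the squared gradient norm.

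For the inner product, I will decompose $\nabla L(w_t)=\nabla F(w_t)+\E[(\sigma(v^\top x)-y)\sigma'(w_t^\top x)x]$. The realizable contribution $\iprod{\nabla F(w_t),w_t-v}\ge \gamma\|w_t-v\|_2^2$ comes directly from Lemma~\ref{lem:gen:inner-product-lb} (valid under the regularity hypotheses, which is where the hypothesis $F(w_t)\le F(0)-\delta$ is used). For the cross term, Cauchy--Schwarz and Young's inequality give
\[
\bigl|\iprod{\E[(\sigma(v^\top x)-y)\sigma'(w_t^\top x)x],\,w_t-v}\bigr| \;\le\; \sqrt{2\,OPT}\cdot\sqrt{\E[((w_t-v)^\top x)^2]},
\]
and with $u=(w_t-v)/\|w_t-v\|$ I bound $\E[(u^\top x)^2]=\E[(\tu^\top\tx+b_u)^2]\le 2\beta_2\|\tu\|_2^2+2b_u^2\le 2\max(\beta_2,1)$ using the approximate isotropicity (ii), so this term is at least $-C_0\sqrt{OPT}\,\|w_t-v\|_2$ for an absolute constant $C_0$.

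For the gradient norm, I split $\nabla L(w_t)=\nabla F(w_t)+H(w_t)$ and bound each piece via Cauchy--Schwarz exactly as in the Gaussian case: since $\sigma$ is $1$-Lipschitz and $\sigma'\le 1$,
\[
\|\nabla F(w_t)\|^2 \;\le\; \|w_t-v\|_2^2\cdot \E[\|x\|^2],\qquad \|H(w_t)\|^2 \;\le\; 2\,OPT\cdot \E[\|x\|^2].
\]
Here $\E[\|x\|^2]=\E[\|\tx\|^2]+1\le d\beta_2+1=O(d)$ by (i), giving $\|\nabla L(w_t)\|^2\le C_1 d(\|w_t-v\|_2^2+OPT)$ for an absolute constant $C_1$ (depending only on the regularity parameters).

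Combining, and using the assumption $\|w_t-v\|_2^2>\gamma^{-1}C_p^2(OPT+\eps)$, which gives $\sqrt{OPT}<\gamma^{1/2}C_p^{-1}\|w_t-v\|_2$, yields
\[
\|w_t-v\|^2-\|w_{t+1}-v\|^2 \;\ge\; 2\eta\bigl(\gamma-2C_0\gamma^{1/2}C_p^{-1}-2C_1 d\eta\bigr)\|w_t-v\|_2^2 \;-\; 2C_1 d\eta^2\,OPT.
\]
Choosing $C_p$ large enough that $2C_0\gamma^{1/2}C_p^{-1}\le \gamma/4$, and $\eta=c_\eta d^{-1}$ with $c_\eta$ small enough that $2C_1 d\eta\le \gamma/4$, the coefficient of $\|w_t-v\|_2^2$ becomes $\Omega(\gamma)$, and re-using the inductive lower bound on $\|w_t-v\|_2^2$ absorbs the $OPT$ term, producing the desired improvement $\eta C'(OPT+\eps)$.

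The only conceptual obstacle is making sure every distributional quantity that appeared as an explicit Gaussian constant in the proof of Lemma~\ref{lem:induction} (the second moment $\E[(u^\top x)^2]$, and the norm $\E[\|x\|^2]$) is controlled by the $O(1)$-regularity assumption (i); all of these are already contained in the definition of $O(1)$-regular, so the rest is bookkeeping to track how $\beta_2$ enters the absolute constants $C',c_\eta$. No new ideas beyond the population proof and the previously established Lemma~\ref{lem:gen:inner-product-lb} are needed.
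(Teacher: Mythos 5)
Your proposal is correct and follows essentially the same route as the paper's proof: the same expansion of $\|w_t-v\|^2-\|w_{t+1}-v\|^2$, the same invocation of Lemma~\ref{lem:gen:inner-product-lb} for the realizable term, the same Cauchy--Schwarz bound on the cross term and on $\|\nabla L(w_t)\|^2$ with the Gaussian moments replaced by the regularity constants $\beta_2$, and the same final absorption of the $OPT$ term via the hypothesis $\|w_t-v\|^2>\gamma^{-1}C_p^2(OPT+\eps)$ and the choice $\eta=c_\eta d^{-1}$. No gaps.
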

\begin{proof}




Resembling the proof of Lemma \ref{lem:induction}, to lower-bound $\|w_t - v\|^2 - \|w_{t+1} - v\|^2$, we will give a lower bound for $\langle \nabla L(w_t), w_t - v \rangle$ and an upper bound for $\|\nabla L(w_t)\|^2$.

\paragraph{Lower bounding $\langle \nabla L(w_t), w_t - v \rangle$} 

A direct application of Lemma \ref{lem:gen:inner-product-lb} already gives a lower bound on $\langle \nabla F(w_t), w_t - v \rangle$, hence we need only focus on lower bounding $\langle \E [(\sigma(v^\top x) - y)\sigma'(w^\top_t x)x], w_t -v \rangle$, and by Cauchy–Schwarz and Young's inequality, we immediately get
\begin{align*}
    &\langle \E [(\sigma(v^\top x) - y)\sigma'(w^\top_t x)x], w_t -v \rangle
    ~=~ \E [(\sigma(v^\top x) - y)\sigma'(w^\top x)(w^\top_t x - v^\top x)] \\
    &\geq -\sqrt{\E [(\sigma(v^\top x) - y)^2]} \sqrt{\E [(w^\top_t x - v^\top x)^2\sigma'(w^\top_t x)]}
    \geq -\sqrt{2OPT} \sqrt{\E [((w_t - v)^\top x)^2]} \\
    &\geq -\sqrt{2OPT} \cdot \sqrt{C_{\beta} \beta_2}\|w_t - v\|
    \geq -C_{\beta}' \sqrt{\beta_2} \cdot \sqrt{OPT} \cdot \|w_t - v\|
\end{align*}

with constants $C_{\beta}, C_{\beta}' > 0$. Putting the bound above along with that of Lemma \ref{lem:gen:inner-product-lb} together we get
\begin{equation*}
    \nabla L(w_t) = \nabla F(w_t) + \E [(\sigma(v^\top x) - y)\sigma'(w^\top_t x)x] \ge \gamma \norm{w_t - v}_2^2 - C_{\beta}' \sqrt{\beta_2} \cdot \sqrt{OPT} \cdot \norm{w_t-v}_2
\end{equation*}

\paragraph{Upper bounding $\|\nabla L(w_t)\|^2$} 


Recall $\nabla L(w_t) = \nabla F(w_t) + H(w_t) ~\Longrightarrow~ \|\nabla L(w_t)\|^2 \leq 2\|\nabla F(w_t)\|^2 + 2\|H(w_t)\|^2$. For the first term,
\begin{equation*}
    \| \nabla F(w_t) \| \leq~ \E[|\sigma(w_t^\top x) - \sigma(v^\top x)| \cdot |\sigma'(w_t^\top x)| \cdot \|x\|] 
    \leq~ \E[|w_t^\top x - v^\top x| \cdot \|x\|]
\end{equation*}

since $\sigma(\cdot)$ is 1-Lipschitz (i.e. $|\sigma(z) - \sigma(z')| \leq |z - z'|$) and $\sigma'(\cdot) \leq 1$. Hence, applying Cauchy-Schwarz yields
\begin{equation*}
    \leq~ \sqrt{\E[|w_t^\top x - v^\top x|^2] \cdot \E[\|x\|^2]}
    \leq~ \|w_t - v\| \cdot \sqrt{\beta_2 d+1}
\end{equation*}

Similarly, for the second term,
\begin{equation*}
    \| H(w_t) \| \leq ~ \E[|\sigma(v^\top x) - y| \cdot \|x\|] 
    \leq~ \sqrt{\E[|\sigma(v^\top x) - y|^2] \cdot \E[\|x\|^2]}
    \leq ~ \sqrt{2 OPT} \cdot \sqrt{\beta_2 d+1}
\end{equation*}

Using the above two expression, we can hence bound $\|\nabla L(w_t)\|^2$ as
\begin{equation*}
    \| \nabla L(w_t) \|^2 \leq~ 2\| F(w_t) \|^2 + 2\| H(w_t) \|^2 \leq~ C_{\beta}'' d\|w-v\|^2 + C_{\beta}'' d OPT
\end{equation*}

for some constant $C_{\beta}'' > 0$.

\paragraph{Lower bounding $\|w_t - v\|^2 - \|w_{t+1} - v\|^2$} The above inequalities yield
\begin{align*}
    &\|w_t - v\|^2 - \|w_{t+1} - v\|^2 = 2\eta \langle \nabla L(w_t), w_t - v \rangle - \eta^2 \|\nabla L(w_t)\|^2 \\
    &\geq 2\eta \cdot \Big [ \gamma \|w_t - v\|^2 - C_{\beta}'\sqrt{\beta_2} \sqrt{OPT}\|w_t-v\| \Big ]
    - C_{\beta}'' d \eta^2 \cdot (\|w-v\|^2 + OPT) \\
    &\geq 2\eta \cdot \Big [ \gamma \|w_t - v\|^2 - C_{\beta}'\sqrt{\beta_2} \gamma^{1/2} C_p^{-1}\|w_t-v\|^2 \Big ]
    - C_{\beta}'' d \eta^2 \cdot (\|w-v\|^2 + OPT) \\
    &= 2\eta \Big( \gamma - C_{\beta}'\sqrt{\beta_2} \gamma^{1/2}C_p^{-1} - \dfrac{C_{\beta}''}{2} d \eta \Big) \|w_t - v\|^2 - 2\eta \cdot \dfrac{C_{\beta}''}{2} d\eta OPT
\end{align*}

due to our assumption that (b) does not hold yet, i.e. $\|w_t - v\| > C_p \gamma^{-1/2} \sqrt{(OPT + \eps)} > C_p \gamma^{-1/2} \sqrt{OPT}$ with some constant $C_p > 0$, implying $\sqrt{OPT} < \gamma^{1/2} C_p^{-1} \|w_t - v\|$. Consequently, by choosing $\eta \leq O(d^{-1})$, we get 
\begin{align*}
    &\geq 2\eta \Big( C_1 \gamma \|w_t - v\|^2 - C_2 OPT \Big)
    \geq 2\eta \Big( C_1 C_p^2 (OPT + \eps) - C_2 OPT \Big) \\
    &\geq \eta C' (OPT + \eps)
\end{align*}

where $C_1, C_2, C' > 0$ are constants. Hence the proof follows.
\end{proof}

With the lemmas above, we are now ready to prove Theorem \ref{thm:main-regular-dist}.

\subsection{Proof of Theorem \ref{thm:main-regular-dist}}

The proof highly resembles that of Theorem \ref{thm:main-informal} by inductively maintaining the same two invariants in every iteration of the algorithm:
\[ 
\text{(A)}~~~ \norm{w_t -v}_2 \le O(1),~~~~\text{ and } ~~~\text{(B)}~~ F(0) - F(w_t) = \Omega(1). \label{eq:invariants}\]
Hence, we only highlight the difference compared to the previous proof.

The proof also consists of three parts. For the first part, we simply replace Lemmas \ref{lem:induction} and \ref{lem:f-lipschitz} with Lemmas \ref{lem:induction-regular} and \ref{lem:f-lipschitz-regular}, resulting in the same argument that after $T \leq O(d (OPT + \eps)^{-1})$ iterations we get $F(w_T) \leq O(OPT) + \eps$, therefore $L(w_T) \leq O(OPT) + \eps$.

In the second part of the proof, Lemmas \ref{lem:grad-opt}, \ref{lem:smooth}, \ref{lem:smooth-expansion} remain valid for $O(1)$-regular distributions, therefore we need only note that for any unit vector $u \in \R^{d+1}$,
\[
    \E [(u^\top x)^2] \leq~ 2 \E [(\widetilde{u}^\top \widetilde{x})^2] + 2b_u^2 \leq 2\beta_2 + 2b_u^2
    \leq O(\beta_2)
\]

which only affects the bounds for $\| \nabla L(w_t) \|$ and $\| H(w_t) \|$ up to a constant factor. Hence the inequality $F(w_t - \eta \nabla L(w_t)) \leq F(w_t) \leq F(w_0) \leq F(0) - \delta$ also holds.

Finally, in the last part of the proof, a direct application of Lemma \ref{lem:gen:init} justifies the initialization assumption, which concludes the proof.


\section{Invariance to Scaling}\label{app:scaling} \label{sec:scaling}


In this section we show that the guarantees of gradient descent do not change by scale the instance by a multiplicative factor of $\alpha$. Here the instance is scaled by only multiplying the $y$ values by the same factor $\alpha$ (but {\em not} scaling the point $x$). This allows us to assume that $\norm{\tv}_2=1$ without loss of generality as long as the initializer is also in the same length scale (see Lemma~\ref{lem:gen:unknowninit} how the random initialization finds the correct length scale with reasonable probability).

Recall that we consider the loss function $L$ which given hypothesis $w=(\tw, b_w) \in \R^{d+1}$ and input distribution $\tilde{\calD}$ over $(\tx,y) \in \R^d \times \R$ is 
\begin{equation} \label{eq:scaling:loss}
    L(w, \tilde{\calD})=\tfrac{1}{2}\E_{(\tx, y) \sim \tilde{\calD}}[(\sigma(\tw^\top \tx + b_w) -y)^2]. 
\end{equation}

We show the following simple proposition. 
\begin{proposition}\label{prop:scaling}
Let $\alpha>0$, and let $\tilde{\calD}$ be any distribution over $(\tx,y) \in \R^{d} \times \R$, let $\tilde{\calD}_\alpha$ be the corresponding distribution given by $(\tx,y' = \alpha y)$ (only the $y$ values are scaled). For every $w=(\tw,b_w) \in \R^{d+1}$ we have that   
\begin{align}
    L(\alpha w, \tilde{\calD}_\alpha) = \alpha^2 \cdot L(w, \tilde{\calD}), ~~\text{ where } \alpha w = (\alpha \tw, \alpha b_w). 
\end{align}
Moreover, for two runs of gradient descent (with the same step size $\eta$) producing iterates $w_0, w_1, \dots, w_T$ when run on $\tilde{\calD}$ and producing iterates $w'_0, w'_1, \dots, w'_T$ when run on $\tilde{\calD}_\alpha$, we have:  
\begin{align}
    \text{if } w'_0=\alpha w_0, ~~\text{ then }~~\forall t \in \set{0,1,2, \dots, T}, ~ w'_t = \alpha \cdot w_t. 
\end{align}
Finally, if $OPT$ and $OPT_\alpha$ are the optimal losses for $\tilde{\calD}$ and $\tilde{\calD}_\alpha$ respectively, then for any $\beta>0$, $F(w_t) \le \beta \cdot OPT$ if and only if $F(w'_t) \le \beta \cdot OPT_\alpha$. 
\end{proposition}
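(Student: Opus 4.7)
The plan is to exploit the positive homogeneity of the ReLU activation, namely $\sigma(\alpha z) = \alpha \sigma(z)$ for every $\alpha > 0$ and $z \in \R$, which propagates through all three claims.

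For the first claim, I would simply substitute into the definition \eqref{eq:scaling:loss}. Writing out
\[
L(\alpha w, \tilde{\calD}_\alpha) = \tfrac{1}{2}\E_{(\tx, y) \sim \tilde{\calD}}\bigl[(\sigma(\alpha \tw^\top \tx + \alpha b_w) - \alpha y)^2\bigr]
\]
and pulling $\alpha$ out of the ReLU via positive homogeneity yields $\alpha^2 L(w, \tilde{\calD})$.

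For the second claim, I would argue by induction on $t$. The base case $t=0$ holds by assumption. For the inductive step, the essential calculation is that for any $w$ and any $\alpha > 0$,
\[
\nabla_w L(\alpha w, \tilde{\calD}_\alpha) = \E\bigl[(\sigma(\alpha w^\top x) - \alpha y)\sigma'(\alpha w^\top x) x\bigr] = \alpha \cdot \nabla_w L(w, \tilde{\calD}),
\]
where I use $\sigma'(\alpha z) = \sigma'(z)$ for $\alpha > 0$ (this holds everywhere except at $z=0$, and the convention on $\sigma'(0)$ is consistent). Plugging this into the gradient descent update rule gives $w'_{t+1} = \alpha w_t - \eta \cdot \alpha \nabla L(w_t, \tilde{\calD}) = \alpha w_{t+1}$, closing the induction.

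For the third claim, the first part of the proposition immediately implies $OPT_\alpha = \alpha^2 \cdot OPT$ (the minimization over $w \in H$ and $\alpha w \in \alpha H$ is in bijection for $\alpha > 0$, and the loss scales by $\alpha^2$). An identical calculation to part one applied to the realizable loss $F$ shows $F_\alpha(\alpha w_t) = \alpha^2 F(w_t)$, where $F_\alpha$ is the analogue of $F$ for the scaled instance (with minimizer $\alpha v$ replacing $v$). Combining these with $w'_t = \alpha w_t$ from part two, the inequality $F(w_t) \le \beta \cdot OPT$ is equivalent, after multiplying both sides by $\alpha^2$, to $F_\alpha(w'_t) \le \beta \cdot OPT_\alpha$, as claimed.

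I do not anticipate any real obstacles; the whole argument is mechanical once one observes that ReLU is positively homogeneous and that scaling $y$ by $\alpha$ is compatible with scaling the entire hypothesis $w$ (including the bias $b_w$) by $\alpha$. The only minor subtlety is handling $\sigma'$ at $0$, which is resolved by the convention fixed earlier in the paper.
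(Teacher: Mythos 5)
Your proposal is correct and follows essentially the same route as the paper's proof: positive homogeneity of the ReLU for the loss scaling, induction with $\sigma'(\alpha z)=\sigma'(z)$ for the gradient-descent iterates, and the scaling identity applied to $F$ and $OPT$ for the final equivalence. No gaps.
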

\begin{proof}
The first part follows directly from \eqref{eq:scaling:loss}. We have
\begin{align*}
    L(\alpha w, \tilde{\calD}_\alpha) & = \tfrac{1}{2}\E_{(\tx, y') \sim \tilde{\calD}_\alpha}[(\sigma(\alpha \tw^\top \tx + \alpha b_w) -y')^2] = \tfrac{1}{2}\E_{(\tx, y) \sim \tilde{\calD}}[(\sigma(\alpha \tw^\top \tx + \alpha b_w) -\alpha y)^2] \\
    &= \tfrac{1}{2}\E_{(\tx, y) \sim \tilde{\calD}}[ ( \alpha \sigma(\tw^\top \tx + b_w) - \alpha y)^2] = \alpha^2 L(w, \tilde{\calD}). 
\end{align*}
The second part uses the form of the gradient update through a simple induction. The base case is true since by assumption $w'_0=\alpha w_0$. Suppose $w'_t = \alpha w_t$. Let $\calD_\alpha$ denote the distribution over $x=(\tx,1),y'=\alpha y$ corresponding to $\tilde{\calD}_\alpha$. Recall that $w'_{t+1} = w'_t - \nabla L(w'_t, \calD_\alpha)$ where
\begin{align*}
    \nabla L(w'_t, \calD_\alpha) &= \E_{(x,y') \sim \calD_\alpha} \Big[ (\sigma(w^\top x) - y')\sigma'(w^\top x)x \Big].\\
    \text{Hence } w'_{t+1} &= w'_t - \eta \nabla L(w'_t, \calD_\alpha) = w'_t - \eta \E_{(x,y') \sim \calD_\alpha} \Big[ (\sigma(\alpha w^\top x) - y')\sigma'(\alpha w^\top x)x \Big] \\
    &= \alpha w_t - \alpha  \cdot \eta \E_{(x,y) \sim \calD} \Big[ (\sigma(\alpha w^\top x) - \alpha y)\sigma'(w^\top x)x \Big]\\
    &= \alpha w_t - \alpha \cdot \nabla L(w_t, \calD) = \alpha w_{t+1}. 
\end{align*}
Note that the last but second line used the fact that $\sigma'(\alpha w^\top x) = \mathbf{I}[\alpha w^\top x \ge 0] = \sigma'(w^\top x)$ when $\alpha>0$. The last part of the proposition just follows from the first claim that $L(\alpha w, \tilde{\calD}_\alpha) = \alpha^2 \cdot L(w, \tilde{\calD})$ for all $w$ applied to $w_T, w'_T = \alpha w_T$ and the optimal solutions corresponding to $OPT$ and $OPT_\alpha$.   
\end{proof}

\emph{Remark.} We remark that the above proposition essentially shows that we can assume that $\norm{\tv}_2 =1$, almost without loss of generality. However, this proposition assumes that initializer $\tw_0$ can also be scaled accordingly i.e., the initializer $\tw_0$ continues to have the same length scale as $\tv$. This is achieved by our random initialization strategy in Section~\ref{sec:unknownlength}, since it tries out many different length scales. 

\end{document}